\definecolor{Blue}{rgb}{0.9,0.3,0.3}
\newcommand{\squishlist}{
   \begin{list}{$\bullet$}
    { \setlength{\itemsep}{0pt}      \setlength{\parsep}{3pt}
      \setlength{\topsep}{3pt}       \setlength{\partopsep}{0pt}
      \setlength{\leftmargin}{1.5em} \setlength{\labelwidth}{1em}
      \setlength{\labelsep}{0.5em} } }
\newcommand{\squishlisttwo}{
   \begin{list}{$\bullet$}
    { \setlength{\itemsep}{0pt}    \setlength{\parsep}{0pt}
      \setlength{\topsep}{0pt}     \setlength{\partopsep}{0pt}
      \setlength{\leftmargin}{2em} \setlength{\labelwidth}{1.5em}
      \setlength{\labelsep}{0.5em} } }
\newcommand{\squishend}{
    \end{list}  }
\newcommand{\myvec}[1]{\mathbf{#1}}
\newcommand{\vP}{\myvec{P}}
\newcommand{\vU}{\myvec{U}}
\newcommand{\vW}{\myvec{W}}
\newcommand{\be}{\begin{equation}}
\newcommand{\ee}{\end{equation}}
\newcommand{\bea}{\begin{eqnarray}}
\newcommand{\eea}{\end{eqnarray}}
\newcommand{\beaa}{\begin{eqnarray*}}
\newcommand{\eeaa}{\end{eqnarray*}}
\DeclareMathAlphabet{\mathpzc}{OT1}{pzc}{m}{n}
\newcommand{\note}[1]{}
\renewcommand{\note}[1]{~\\\frame{\begin{minipage}[c]{0.48\textwidth}\vspace{2pt}\center{#1}\vspace{2pt}\end{minipage}}\vspace{3pt}\\}
\newcommand{\hide}[1]{}
\DeclareMathOperator{\argmax}{arg\,max}
\newtheorem{mydefinition}{Definition}
\newtheorem{theorem}[mydefinition]{Theorem}
\newtheorem{remark}[mydefinition]{Remark}
\newtheorem{lemma}[mydefinition]{Lemma}
\icmltitlerunning{Relative Upper Confidence Bound}
\begin{document} 

\twocolumn[
\icmltitle{Relative Upper Confidence Bound for the\\ $K$-Armed Dueling Bandit Problem}

\icmlauthor{Masrour Zoghi}{m.zoghi@uva.nl}
\icmladdress{ISLA, University of Amsterdam, The Netherlands}
\icmlauthor{Shimon Whiteson}{s.a.whiteson@uva.nl}
\icmladdress{ISLA, University of Amsterdam, The Netherlands}
\icmlauthor{Remi Munos}{remi.munos@inria.fr}
\icmladdress{INRIA Lille - Nord Europe, Villeneuve d'Ascq, France}
\icmlauthor{Maarten de Rijke}{derijke@uva.nl}
\icmladdress{ISLA, University of Amsterdam, The Netherlands}

\icmlkeywords{dueling bandits; Thompson sampling}

\vskip 0.3in
]

\begin{abstract} 
This paper proposes a new method for the \emph{$K$-armed dueling bandit problem}, a variation on the regular $K$-armed bandit problem that offers only relative feedback about pairs of arms.  Our approach extends the Upper Confidence Bound algorithm to the relative setting by using estimates of the pairwise probabilities to select a promising arm and applying Upper Confidence Bound with the winner as a benchmark.  
We prove a finite-time regret bound of order $\mathcal O(\log t)$. 
In addition, our empirical results using real data from an information retrieval application show that it greatly outperforms the state of the art.
\end{abstract} 

\section{Introduction}
\label{sec:introduction}


In this paper, we propose and analyze a new algorithm, called Relative Upper Confidence Bound (RUCB), for the \emph{$K$-armed dueling bandit problem} \citep{yue12:k-armed}, a variation on the $K$-armed bandit problem, where the feedback comes in the form of pairwise preferences. We assess the performance of this algorithm using one of the main current applications of the $K$-armed dueling bandit problem, \emph{ranker evaluation} \citep{joachims2002:optimizing,YueJoachims:2011,hofmann:irj13}, which is used in information retrieval, ad placement and recommender systems, among others. 


The $K$-armed dueling bandit problem is part of the general framework of \emph{preference learning} \citep{furnkranz2010,furnkranz2012towards}, where the goal is to learn, not from  real-valued feedback, but from \emph{relative feedback}, which specifies only which of two alternatives is preferred.  Developing effective preference learning methods is important for dealing with domains in which feedback is naturally qualitative (e.g., because it is provided by a human) and specifying real-valued feedback instead would be arbitrary or inefficient \citep{furnkranz2012towards}.



Other algorithms proposed for this problem are Interleaved Filter (IF) \citep{yue12:k-armed}, Beat the Mean (BTM) \citep{YueJoachims:2011}, and SAVAGE \cite{Urvoy:2013}. All of these methods were designed for the \emph{finite-horizon} setting, in which the algorithm requires as input the \emph{exploration horizon}, $T$, the time by which the algorithm needs to produce the best arm. The algorithm is then judged based upon either the \emph{accuracy} of the returned best arm or the \emph{regret} accumulated in the exploration phase.\footnote{These terms are formalized in Section \ref{sec:problemsetting}.} All three of these algorithms use the exploration horizon to set their internal parameters, so for each $T$, there is a separate algorithm $\textup{IF}_T$, $\textup{BTM}_T$ and $\textup{SAVAGE}_T$. By contrast, RUCB does not require this input, making it more useful in practice, since a good exploration horizon is often difficult to guess. Nonetheless, RUCB outperforms these algorithms in terms of the accuracy and regret metrics used in the finite-horizon setting.

The main idea of RUCB is to maintain optimistic estimates of the probabilities of all possible pairwise outcomes, and (1)~use these estimates to select a potential champion, which is an arm that has a chance of being the best arm, and (2)~select an arm to compare to this potential champion by performing regular Upper Confidence Bound \citep{auer2002ucb} relative to it. 


We prove a finite-time high-probability bound of $\mathcal O(\log t)$ on the cumulative regret of RUCB, from which we deduce a bound on the expected cumulative regret. These bounds rely on substantially less restrictive assumptions on the $K$-armed dueling bandit problem than IF and BTM and have better multiplicative constants than those of SAVAGE. Furthermore, our bounds are the first explicitly non-asymptotic results for the $K$-armed dueling bandit problem.  

More importantly, The main distinction of our result is that it holds for \emph{all} time steps.  By contrast, given an exploration horizon $T$, the results for IF, BTM and SAVAGE bound only the regret accumulated by $\textup{IF}_T$, $\textup{BTM}_T$ and $\textup{SAVAGE}_T$ in the first $T$ time steps. 

Finally, we evaluate our method empirically using real data from an information retrieval application.  The results show that RUCB can learn quickly and effectively and greatly outperforms BTM and SAVAGE. 

The main contributions of this paper are as follows:

\vspace{-2mm}

\begin{itemize}[leftmargin=*]
\item A novel algorithm for the $K$-armed dueling bandit problem that is more broadly applicable than existing algorithms,

\vspace{-1mm}

\item More comprehensive theoretical results that make less restrictive assumptions than those of IF and BTM, have better multiplicative constants than the results of SAVAGE, and apply to all time steps, and

\vspace{-1mm}

\item Experimental results, based on a real-world application, demonstrating the superior performance of our algorithm compared to existing methods.
\end{itemize}

\if0
Most decision-theoretic problem settings, from $K$-armed bandits to Markov decision processes, assume feedback is provided in the form of a scalar reward signal.  While this model is extremely useful and widely applicable, there are nonetheless many application settings in which scalar rewards are unavailable but a weaker \emph{relative} form of feedback is \citep{furnkranz2012towards}.  This type of feedback indicates only that one action performed better than another, without further quantification. \note{Masrour: I honestly find this paragraph really weak. It's almost apologetic.}

Such applications commonly arise when feedback is directly provided by humans, who can often indicate which of two alternatives they prefer even when they cannot quantify how to prioritize competing objectives.  In other cases, humans are capable of providing scalar feedback but doing so is too burdensome.  For example, in information retrieval, the problem of on-line learning to rank \citep{yue09:inter,hofmann:irj13} is complicated by the fact that users are generally not willing to give explicit feedback about which documents are relevant.  However, by interleaving two candidate rankers, implicit relative feedback can be obtained from the user's resulting click behavior \citep{radlinski2008:how,hofmann11:probabilistic}. To model such settings, \citet{yue2012k} proposed the \emph{$K$-armed dueling bandit problem}, a variation on the regular $K$-armed bandit problem in which only relative feedback about pairs of arms is available. \note{Masrour: Could this be expanded upon a bit? At least one (perhaps more) of the reviewers will have never heard of online learning to rank or maybe even IR in general because the meta reviewers are probably going to get some bandits people to read through the theory. Any additional language about recommender systems or ad placement on the other hand would probably grab their attention.}

In this paper, we propose a new algorithm called \textsc{Relative Upper Confidence Bound} for this setting.  It is based on \textsc{Upper Confidence Bound} \citep{auer2002ucb}, one of the most effective methods for the regular $K$-armed bandit problem.  The main idea is to 1) maintain frequentist estimates of the probabilities of all possible pairwise outcomes, 2) simulate a round-robin tournament among the arms in which each comparison is optimistic with respect to one of the arms, i.e., it is given a ``home advantage", and 3) select an arm to compare to the winner of this tournament by performing regular  \textsc{Upper Confidence Bound} relative to that winner.

We prove a bound of $\mathcal O(\log T)$ on the expected cumulative regret of \textsc{Relative Upper Confidence Bound} under less restrictive conditions than those used to analyze \textsc{Beat-the-Mean} \citep{YueJoachims:2011}, the state of the art for this setting.  In addition, we evaluate our method empirically using real data from an information retrieval application.  The results show that  \textsc{Relative Upper Confidence Bound} can learn quickly and effectively and greatly outperforms \textsc{Beat-the-Mean}.

%
%
\fi
\section{Problem Setting}
\label{sec:problemsetting}



The \emph{$K$-armed dueling bandit} problem \cite{yue12:k-armed} is a modification of the \emph{$K$-armed bandit} problem \cite{auer2002ucb}: the latter considers $K$ arms $\{a_1,\ldots,a_K\}$ and at each \emph{time-step}, an arm $a_i$ can be \emph{pulled}, generating a \emph{reward} drawn from an unknown stationary distribution with expected value $\mu_i$. 
The $K$-armed \emph{dueling} bandit problem is a variation, where instead of pulling a single arm, we choose a pair $(a_i,a_j)$ and receive one of the two as the better choice, with the probability of $a_i$ being picked equal to a constant $p_{ij}$ and that of $a_j$ equal to $p_{ji} = 1 - p_{ij}$. We define the \emph{preference matrix} $\vP=\left[p_{ij}\right]$, whose $ij$ entry is $p_{ij}$. 

In this paper, we assume that there exists a \emph{Condorcet winner} \cite{Urvoy:2013}: an arm, which without loss of generality we label $a_1$, such that $p_{1i} > \frac{1}{2}$ for all $i>1$.  
Given a Condorcet winner, we define \emph{regret} for each time-step as follows \cite{yue12:k-armed}: if arms $a_i$ and $a_j$ were chosen for comparison at time $t$, then regret at that time is set to be $r_t := \frac{\Delta_{1i}+\Delta_{1j}}{2}$, with $\Delta_{k} := p_{1k}-\frac{1}{2}$ for all $k \in \{1,\ldots,K\}$. Thus, regret measures the average advantage that the Condorcet winner has over the two arms being compared against each other. Given our assumption on the probabilities $p_{1k}$, this implies that $r=0$ if and only if the best arm is compared against itself. We define \emph{cumulative regret up to time} $T$ to be $R_T = \sum_{t=1}^T r_t$.

The Condorcet winner is different in a subtle but important way from the \emph{Borda winner} \cite{Urvoy:2013}, which is an arm $a_b$ that satisfies $\sum_j p_{bj} \geq \sum_j p_{ij}$, for all $i=1,\ldots,K$.
In other words, when averaged across all other arms, the Borda winner is the arm with the highest probability of winning a given comparison. In the $K$-armed dueling bandit problem, the Condorcet winner is sought rather than the Borda winner, for two reasons. First, in many applications, including the ranker evaluation problem addressed in our experiments, the eventual goal is to adapt to the preferences of the users of the system. Given a choice between the Borda and Condorcet winners, those users prefer the latter in a direct comparison, so it is immaterial how these two arms fare against the others. Second, in settings where the Borda winner is more appropriate, no special methods are required: one can simply solve the $K$-armed bandit algorithm with arms $\{a_1,\ldots,a_K\}$, where pulling $a_i$ means choosing an index $j \in \{1,\ldots,K\}$ randomly and comparing $a_i$ against $a_j$. Thus, research on the $K$-armed dueling bandit problem focuses on finding the Condorcet winner, for which special methods are required to avoid mistakenly choosing the Borda winner.

The goal of a bandit algorithm can be formalized in several ways. In this paper, we consider two standard settings:

\vspace{-3mm}

\begin{enumerate}[leftmargin=*]
\item \emph{The finite-horizon setting}: In this setting, the algorithm is told in advance the exploration \emph{horizon}, $T$, i.e., the number of time-steps that the evaluation process is given to explore before it has to produce a single arm as the best, which will be exploited thenceforth. In this setting, the algorithm can be assessed on its \emph{accuracy}, the probability that a given run of the algorithm reports the Condorcet winner as the best arm \cite{Urvoy:2013}, which is related to expected \emph{simple regret}: the regret associated with the algorithm's choice of the best arm, i.e., $r_{T+1}$ \cite{Bubeck:2009}. Another measure of success in this setting is the amount of regret accumulated during the exploration phase, as formulated by the \emph{explore-then-exploit} problem formulation \cite{yue12:k-armed}.


\item \emph{The horizonless setting}: In this setting, no horizon is specified and the evaluation process continues indefinitely.  Thus, it is no longer sufficient for the algorithm to maximize accuracy or minimize regret after a single horizon is reached. Instead, it must minimize regret across \emph{all} horizons by rapidly decreasing the frequency of comparisons involving suboptimal arms, particularly those that fare worse in comparison to the best arm. This goal can be formulated as minimizing the cumulative regret over time, rather than with respect to a fixed horizon \cite{lai85:bandit-lb}. 
\end{enumerate}

\vspace{-2mm}

As we describe in Section \ref{sec:relatedwork}, all existing $K$-armed dueling bandit methods target the finite-horizon setting. However, we argue that the horizonless setting is more relevant in practice for the following reason: finite-horizon methods require a horizon as input and often behave differently for different horizons.  This poses a practical problem because it is typically difficult to know in advance how many comparisons are required to determine the best arm with confidence and thus how to set the horizon.  If the horizon is set too long, the algorithm is too exploratory, increasing the number of evaluations needed to find the best arm.  If it is set too short, the best arm remains unknown when the horizon is reached and the algorithm must be restarted with a longer horizon. 

Moreover, any algorithm that can deal with the horizonless setting can easily be modified to address the finite-horizon setting by simply stopping the algorithm when it reaches the horizon and returning the best arm. By contrast, for the reverse direction, one would have to resort to the ``doubling trick'' \citep[Section 2.3]{Cesa-Bianchi:2006}, which leads to substantially worse regret results: this is because all of the upper bounds proven for methods addressing the finite-horizon setting so far are in $\mathcal O(\log T)$ and applying the doubling trick to such results would lead to regret bounds of order $(\log T)^2$, with the extra log factor coming from the number of partitions.

To the best of our knowledge, RUCB is the first $K$-armed dueling bandit algorithm that can function in the horizonless setting without resorting to the doubling trick. We show in Section \ref{sec:algorithm} how it can be adapted to the finite-horizon setting.



\section{Related Work}
\label{sec:relatedwork}


In this section, we briefly survey existing methods for the $K$-armed dueling bandit problem.


The first method for the $K$-armed dueling bandit problem is \emph{interleaved filter} (IF) \cite{yue12:k-armed}, which was designed for a finite-horizon scenario and which proceeds by picking a \emph{reference} arm to compare against the rest and using it to eliminate other arms, until the reference arm is eliminated by a better arm, in which case the latter becomes the reference arm and the algorithm continues as before. The algorithm terminates either when all other arms are eliminated or if the exploration horizon $T$ is reached.


More recently, the \emph{beat the mean} (BTM) algorithm has been shown to outperform IF \cite{YueJoachims:2011}, while imposing less restrictive assumptions on the $K$-armed dueling bandit problem. BTM focuses exploration on the arms that have been involved in the fewest comparisons.  When it determines that an arm fares on average too poorly in comparison to the remaining arms, it removes it from consideration. More precisely, BTM considers the performance of each arm against the \emph{mean arm} by averaging the arm's scores against all other arms and uses these estimates to decide which arm should be eliminated.

Both IF and BTM require the comparison probabilities $p_{ij}$ to satisfy certain conditions that are difficult to verify without specific knowledge about the dueling bandit problem at hand and, moreover, are  often violated in practice 
(see the supplementary material for a more thorough discussion and analysis of these assumptions). Under these conditions, theoretical results have been proven for IF and BTM in \cite{yue12:k-armed} and \cite{YueJoachims:2011}. More precisely, both algorithms take the exploration horizon $T$ as an input and so for each $T$, there are algorithms $\textup{IF}_T$ and $\textup{BTM}_T$; the results then state the following: for large $T$, in the case of $\textup{IF}_T$, we have the expected regret bound

\vspace{-4mm}

\[ \mathbb{E}\left[R^{\textup{IF}_T}_T\right] \leq C \frac{K \log T}{\min_{j=2}^K \Delta_j}, \]
and, in the case of $\textup{BTM}_T$, the high probability regret bound

\vspace{-4mm}

\[ R^{\textup{BTM}_T}_T \leq C^{'} \frac{\gamma^7 K \log T}{\min_{j=2}^K \Delta_j} \textup{ with high probability,} \]

where arm $a_1$ is assumed to be the best arm, and we define $\Delta_{j} := p_{1j}-\frac{1}{2}$, and $C$ and $C^{'}$ are constants independent of the specific dueling bandit problem. 

The first bound matches a lower bound proven in \citep[Theorem 4]{yue12:k-armed}. However, as pointed out in \cite{YueJoachims:2011}, this result holds for a very restrictive class of $K$-armed dueling bandit problems. In an attempt to remedy this issue, the second bound was proven for BTM, which includes a relaxation parameter $\gamma$ that allows for a broader class of problems, as discussed in the supplementary material. The difficulty with this result is that the parameter $\gamma$, which depends on the probabilities $p_{ij}$ and must be passed to the algorithm, can be very large. Since it is raised to the power of $7$, this makes the bound very loose. For instance, in the three-ranker evaluation experiments discussed in Section \ref{sec:experiments}, the values for $\gamma$ are $4.85$, $11.6$ and $47.3$ for the $16$-, $32$- and $64$-armed examples.

In contrast to the above limitations and loosenesses, in Section \ref{sec:theory} we provide \emph{explicit} bounds on the regret accumulated by RUCB that do not depend on $\gamma$ and require only the existence of a Condorcet winner for their validity, which makes them much more broadly applicable.

Sensitivity Analysis of VAriables for Generic Exploration (SAVAGE) \cite{Urvoy:2013} is a recently proposed algorithm that outperforms both IF and BTM by a wide margin when the number of arms is of moderate size. Moreover, one version of SAVAGE, called \emph{Condorcet SAVAGE}, makes the Condorcet assumption and performed the best experimentally \cite{Urvoy:2013}. Condorcet SAVAGE compares pairs of arms uniformly randomly until there exists a pair for which one of the arms beats another by a wide margin, in which case the loser is removed from the pool of arms under consideration. We show in this paper that our proposed algorithm for ranker evaluation substantially outperforms Condorcet SAVAGE.

The theoretical result proven for Condorcet SAVAGE has the following form \citep[Theorem 3]{Urvoy:2013}. First, let us assume that $a_1$ is the Condorcet winner and let $\widehat{T}_{\textup{CSAVAGE}_T}$ denote the number of iterations the Condorcet SAVAGE algorithm with exploration horizon $T$ requires before terminating and returning the best arm; then, given $\delta > 0$, with probability $1-\delta$, we have for large $T$

\vspace{-5mm}

\begin{equation*}
\widehat{T}_{\textup{CSAVAGE}_T} \leq C^{''} \sum_{j=1}^{K-1} \frac{j \cdot \log\left(\frac{KT}{\delta}\right)}{\Delta_{j+1}^2},
\end{equation*}

\vspace{-2mm}

with the indices $j$ arranged such that $\Delta_2 \leq \cdots \leq \Delta_K$ and $\Delta_j = p_{1j}-\frac{1}{2}$ as before, and $C^{''}$ a problem independent constant. This bound is very similar in spirit to our high probability result, with the important distinction that, unlike the above bound, the multiplicative factors in our result (i.e., the $D_{ij}$ in Theorem \ref{thm:HighProbBound} below) do not depend on $\delta$. Moreover, in \citep[Appendix B.1]{Urvoy:2013}, the authors show that for large $T$ we have the following expected regret bound:

\vspace{-3mm}

\[ \mathbb{E}\left[R^{\textup{CSAVAGE}_T}_T\right] \leq C^{''} \sum_{j=2}^K \frac{j \cdot \log\left(KT^2\right)}{\Delta_{j}^2} + 1. \]

\vspace{-2mm}

This is similar to our expected regret bound in Theorem \ref{thm:ExpBound}, although for difficult problems where the $\Delta_j$ are small, Theorem \ref{thm:ExpBound} yields a tighter bound due to the presence of the $\Delta_j$ in the numerator of the second summand.

An important advantage that our result has over the results reviewed here is an explicit expression for the additive constant, which was left out of the analyses of IF, BTM and SAVAGE. 

Finally, note that all of the above results bound only $R_T$, where $T$ is the predetermined exploration horizon, since IF, BTM and SAVAGE were designed for the finite-horizon setting. By contrast, in Section \ref{sec:theory}, we bound the cumulative regret of each version of our algorithm for \emph{all} time steps.

\section{Method}
\label{sec:algorithm}


We now introduce Relative Upper Confidence Bound (RUCB), which is applicable to any $K$-armed dueling bandit problem with a Condorcet winner.

\vspace{-2mm}

\begin{algorithm}[h]
\caption{Relative Upper Confidence Bound}
\label{alg:RUCB}
\begin{algorithmic}[1]
{
\REQUIRE $\alpha > \frac{1}{2}$, $T \in \{1,2,\ldots\} \cup \{\infty\}$
\STATE $\vW = \left[w_{ij}\right] \gets \mathbf{0}_{K \times K} \; $ // 2D array of wins: $w_{ij}$ is the number of times $a_i$ beat $a_j$
\FOR{$t=1,\dots,T$}
   \STATE $\vU := \left[u_{ij}\right] = \frac{\vW}{\vW+\vW^T} + \sqrt{\frac{\alpha\ln t}{\vW+\vW^T}}$ \; // All operations are element-wise; $\frac{x}{0}:=1$ for any $x$. 
   \STATE $u_{ii} \gets \frac{1}{2}$ for each $i=1,\ldots,K$.
   \STATE Pick any $c$ satisfying $u_{cj} \geq \frac{1}{2}$ for all $j$. If no such $c$, pick $c$ randomly from $\{1,\ldots,K\}$.
   \STATE $d \gets \displaystyle\argmax_j u_{jc}$ 
   \STATE Compare arms $a_c$ and $a_d$ and increment $w_{cd}$ or $w_{dc}$ depending on which arm wins.
\ENDFOR
\ENSURE An arm $a_c$ that beats the most arms, i.e., $c$ with the largest count~$\# \left\{ j | \frac{w_{cj}}{w_{cj}+w_{jc}} > \frac{1}{2} \right\}$. 
}
\end{algorithmic}
\end{algorithm}

\vspace{-1mm}

In each time-step, RUCB, shown in Algorithm \ref{alg:RUCB}, goes through the following three stages: 

%
%
%
%
%

(1) RUCB puts all arms in a pool of potential champions. Then, it compares each arm $a_i$ against all other arms optimistically: for all $i \neq j$, we compute the upper bound $u_{ij}(t) = \mu_{ij}(t) + c_{ij}(t)$, where  $\mu_{ij}(t)$ is the frequentist estimate of $p_{ij}$ at time $t$ and $c_{ij}(t)$ is an optimism bonus that increases with $t$ and decreases with the number of comparisons between $i$ and $j$ (Line 3). If we have $u_{ij} < \frac{1}{2}$ for any $j$, then $a_i$ is removed from the pool. Next, a champion arm $a_c$ is chosen randomly from the remaining potential champions (Line 5).


(2) Regular UCB is performed using $a_c$ as a benchmark (Line 6), i.e., UCB is performed on the set of arms $a_{1c} \ldots a_{Kc}$.  Specifically, we select the arm $d = \argmax_j u_{jc}$.  When $c \neq j$, $u_{jc}$ is defined as above.  When $c = j$, since $p_{cc}=\frac{1}{2}$, we set $u_{cc}=\frac{1}{2}$ (Line 4).




(3) The pair $(a_c,a_d)$ are compared and the score sheet is updated as appropriate (Line 7).

Note that in stage (1) the comparisons are based on $u_{cj}$, i.e., $a_c$ is compared optimistically to the other arms, making it easier for it to become the champion. By contrast, in stage (2) the comparisons are based on $u_{jc}$, i.e., $a_c$ is compared to the other arms pessimistically, making it more difficult for $a_c$ to be compared against itself. This is important because comparing an arm against itself yields no information. Thus, RUCB strives to avoid auto-comparisons until there is great certainty that $a_c$ is indeed the Condorcet winner.


Eventually, as more comparisons are conducted, the estimates $\mu_{1j}$ tend to concentrate above $\frac{1}{2}$ and the optimism bonuses $c_{1j}(t)$ will become small.  Thus, both stages of the algorithm will increasingly select $a_1$, i.e., $a_c = a_d = a_1$. Since comparing $a_1$ to itself is optimal, $r_t$ declines over time. 

Note that Algorithm \ref{alg:RUCB} is a finite-horizon algorithm if $T < \infty$ and a horizonless one if $T = \infty$, in which case the for loop never terminates.

\section{Theoretical Results}
\label{sec:theory}


In this section, we prove finite-time high-probability and expected regret bounds for RUCB.  We first state Lemma \ref{lem:HighProbBound} and use it to prove a high-probability bound in Theorem \ref{thm:HighProbBound}, from which we deduce an expected regret bound in Theorem \ref{thm:ExpBound}.


To simplify notation, we assume without loss of generality that $a_1$ is the optimal arm in the following. Moreover, given any $K$-armed dueling bandit algorithm, we define $w_{ij}(t)$ to be the number of times arm $a_i$ has beaten $a_j$ in the first $t$ iterations of the algorithm. We also define $u_{ij}(t) := \frac{w_{ij}(t)}{w_{ij}(t)+w_{ji}(t)} + \sqrt{\frac{\alpha\ln t}{w_{ij}(t)+w_{ji}(t)}}$, for any given $\alpha > 0$, and set $l_{ij}(t) := 1-u_{ji}(t)$. Moreover, for any $\delta > 0$, define $C(\delta) := \left(\frac{(4\alpha-1)K^2}{(2\alpha-1)\delta}\right)^{\frac{1}{2\alpha-1}}$.

\begin{lemma}\label{lem:HighProbBound} Let $\vP := \left[ p_{ij} \right]$ be the preference matrix of a $K$-armed dueling bandit problem with arms $\{a_1,\ldots,a_K\}$, satisfying $p_{1j} > \frac{1}{2}$ for all $j > 1$ (i.e. $a_1$ is the Condorcet winner). Then, for any dueling bandit algorithm and any $\alpha > \frac{1}{2}$ and $\delta > 0$, we have

\vspace{-7mm}

\begin{equation*}  P\Big( \forall\,t>C(\delta),i,j,\; p_{ij} \in [l_{ij}(t),u_{ij}(t)] \Big) > 1-\delta. \end{equation*}
\end{lemma}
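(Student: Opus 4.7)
The plan is to reduce the statement to a symmetric two-sided deviation bound on each pair's empirical mean, apply Hoeffding's inequality together with union bounds over arm pairs and time indices, and then verify that the specified $C(\delta)$ is large enough for the accumulated failure probability to be at most $\delta$.

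First, I would reformulate the event. Using the definition $l_{ij}(t) = 1 - u_{ji}(t)$ together with $p_{ij} + p_{ji} = 1$, the condition $p_{ij} \in [l_{ij}(t), u_{ij}(t)]$ holding for every ordered pair $(i,j)$ is equivalent to the symmetric two-sided bound $|\mu_{ij}(t) - p_{ij}| \leq \sqrt{\alpha \ln t / N_{ij}(t)}$ for every unordered pair $\{i,j\}$, where $\mu_{ij}(t) := w_{ij}(t)/N_{ij}(t)$ and $N_{ij}(t) := w_{ij}(t) + w_{ji}(t)$. So it suffices to bound the probability that this concentration inequality fails for some pair at some $t > C(\delta)$.

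Next, fix an unordered pair $\{i,j\}$ and a time $t$. The outcomes of comparisons between $a_i$ and $a_j$ form an i.i.d.\ Bernoulli$(p_{ij})$ sequence (the algorithm's choice of when to compare the two arms depends only on past outcomes and hence does not alter their marginal distribution), so by Hoeffding's inequality, for each fixed $n \in \{1,\ldots,t\}$, $P(|\hat\mu_n - p_{ij}| > \sqrt{\alpha\ln t/n}) \leq 2 t^{-2\alpha}$. A careful handling of the random stopping time $N_{ij}(t)$, together with a union bound over the $\binom{K}{2}$ unordered pairs, then gives a per-time failure probability of order $K^2 t^{-2\alpha}$.

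Finally, since $\alpha > 1/2$, the tail sum $\sum_{t > C(\delta)} t^{-2\alpha}$ converges and can be bounded by (a small multiple of) $C(\delta)^{1-2\alpha}/(2\alpha - 1)$ via the integral test; setting the resulting total bound equal to $\delta$ and solving for $C(\delta)$ reproduces the stated expression, with the factor $(4\alpha-1)$ arising from a tight evaluation of the tail sum (bounding it by the first term plus the integral, i.e., $N^{-2\alpha} + N^{1-2\alpha}/(2\alpha-1)$, and collecting both contributions). The hardest step will be the proper handling of the random stopping time $N_{ij}(t)$: because $N_{ij}(t)$ depends adaptively on the comparison outcomes, a naive union bound over its possible values would cost an extra factor of $t$ in the per-time bound and restrict the valid range to $\alpha > 1$, so some additional structural argument -- such as a peeling scheme or the observation that, on each block where $N_{ij}$ remains constant, the confidence interval is easiest to violate at the earliest time -- is needed to maintain the $t^{-2\alpha}$ scaling and thus make the bound hold throughout the $\alpha > 1/2$ regime.
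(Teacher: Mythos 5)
Your proposal is correct and follows essentially the same route as the paper: reduce the two-sided event to Hoeffding concentration of $\mu_{ij}$, union-bound over pairs and over the number of comparisons rather than over raw time, and evaluate the tail by the integral test to solve for $C(\delta)$. In particular, the ``additional structural argument'' you flag as the hard step is exactly the paper's key observation (its point (iii)): between consecutive comparisons of $a_i$ and $a_j$ the counts are frozen and the interval only widens with $\ln t$, so it suffices to check the event just after each comparison, which is what keeps the bound valid for all $\alpha>\tfrac{1}{2}$ and yields the $(4\alpha-1)/(2\alpha-1)$ constant.
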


\vspace{-5mm}

\begin{proof}
See the supplementary material.
\end{proof}

\vspace{-3mm}

The idea behind this lemma is depicted in Figure \ref{fig:lemma1}, which illustrates the two phenomena that make it possible: first, as long as arms $a_i$ and $a_j$ are not compared against each other, the interval $[l_{ij}(t),u_{ij}(t)]$ will grow in length as $\sqrt{\log t}$, hence approaching $p_{ij}$; second, as the number of comparisons between $a_i$ and $a_j$ increases, the estimated means $\mu_{ij}$ approach $p_{ij}$, hence increasing the probability that the interval $[l_{ij}(t),u_{ij}(t)]$ will contain $p_{ij}$. 

\begin{figure}[!t]
\centering
\includegraphics[width=.485\textwidth]{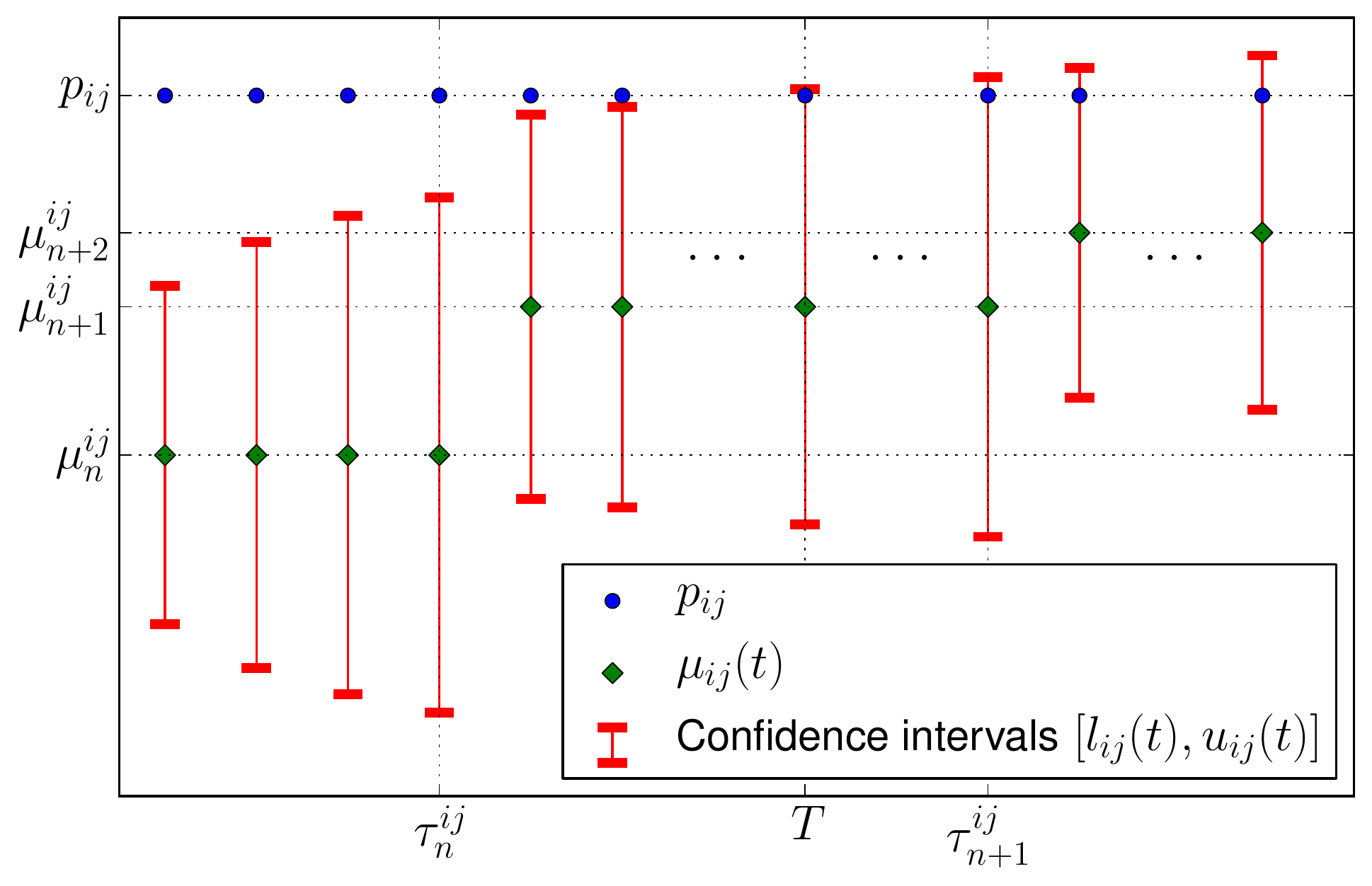}

\vspace{-5mm}

\caption{An illustration of the idea behind Lemma \ref{lem:HighProbBound} using an example of how the confidence intervals of a single pair of arms $(a_i,a_j)$, and their relation to the comparison probability $p_{ij}$, might evolve over time. The time-step $\tau^{ij}_m$ denotes the $m^{\textup{th}}$ time when the arms $a_i$ and $a_j$ were chosen by RUCB to be compared against each other. We also define $\mu^{ij}_m := \mu_{ij}(\tau^{ij}_m)$. The time $T$ is when the confidence intervals begin to include $p_{ij}$. The lemma then states that with probability $1-\delta$, we have $T \leq C(\delta)$.}


\label{fig:lemma1}
\end{figure}

Let us now turn to our high probability bound:

\begin{theorem}\label{thm:HighProbBound} Given a preference matrix $\vP = [p_{ij}]$ and $\delta > 0$ and $\alpha > \frac{1}{2}$, define $C(\delta) := \left(\frac{(4\alpha-1)K^2}{(2\alpha-1)\delta}\right)^{\frac{1}{2\alpha-1}}$ and $D_{ij} := \frac{4\alpha}{\min\{\Delta_i^2,\Delta_j^2\}}$ for each $i,j = 1,\ldots,K$ with $i \neq j$, where $\Delta_i := \frac{1}{2}-p_{i1}$, and set $D_{ii} = 0$ for all $i$. Then, if we apply Algorithm \ref{alg:RUCB} to the $K$-armed dueling bandit problem defined by $\vP$, given any pair $(i,j) \neq (1,1)$, the number of comparisons between arms $a_i$ and $a_j$ performed up to time $t$, denoted by $N_{ij}(t)$, satisfies

\vspace{-7mm}

\begin{equation}
\label{bnd:HighProbCount}
P\bigg(\forall\,t,\; N_{ij}(t) \leq \max\Big\{C(\delta),D_{ij}\ln t\Big\} \bigg) > 1-\delta.
\end{equation}

\vspace{-3mm}

Moreover, we have the following high probability bound for the regret accrued by the algorithm:

\vspace{-7mm}

\begin{equation}\label{bnd:HighProbReg} P\bigg(\forall\,t,\; R_t \leq C(\delta)\Delta^*+\sum_{i>j} D_{ij} \Delta_{ij} \ln t \bigg) > 1-\delta, \end{equation}  

\vspace{-4mm}

where $\Delta^* := \max_i \Delta_i$ and $\Delta_{ij} := \frac{\Delta_i+\Delta_j}{2}$, while $R_t$ is the cumulative regret as defined in Section \ref{sec:problemsetting}.
\end{theorem}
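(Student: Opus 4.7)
My plan is to condition throughout on the event $G$ from Lemma~\ref{lem:HighProbBound} --- the event that $p_{ij} \in [l_{ij}(t), u_{ij}(t)]$ for all $t > C(\delta)$ and all $i,j$ --- which occurs with probability at least $1-\delta$. It then suffices to show that the bounds (\ref{bnd:HighProbCount}) and (\ref{bnd:HighProbReg}) hold \emph{deterministically} on $G$. An immediate warm-up observation: on $G$ we have $u_{1j}(t) \geq p_{1j} > 1/2$ for every $j>1$ and every $t > C(\delta)$, so the Condorcet winner $a_1$ always belongs to the champion pool and the ``no such $c$'' fallback in Line~5 never triggers.

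\textbf{Counting comparisons.} The core step is to convert the event ``pair $(a_i,a_j)$ is compared at time $t>C(\delta)$'' into upper bounds on the corresponding $N$-counts, using two inequalities available on $G$. The \emph{champion-pool inequality}: if $c=i$ is chosen then $u_{ik}(t) \geq 1/2$ for every $k$, which at $k=1$ combines with $u_{i1}(t) \leq p_{i1} + 2\sqrt{\alpha \ln t / N_{i1}(t)}$ to give
\[ N_{i1}(t) \leq \frac{4\alpha \ln t}{\Delta_i^2}, \]
already establishing (\ref{bnd:HighProbCount}) for pairs $(i,1)$ with $i>1$. The \emph{argmax inequality}: since $d=j=\argmax_k u_{ki}$ and $u_{1i}(t) \geq p_{1i}$ on $G$, we get $u_{ji}(t) \geq p_{1i}$, which paired with $u_{ji}(t) \leq p_{ji} + 2\sqrt{\alpha \ln t / N_{ji}(t)}$ yields $N_{ji}(t) \leq 4\alpha \ln t / (p_{1i}-p_{ji})^2$ whenever $p_{1i} > p_{ji}$. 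A case split on $\sgn(p_{ij}-1/2)$, applied symmetrically to both orderings $(c,d) \in \{(i,j),(j,i)\}$, will promote these into the unified estimate $N_{ij}(t) \leq D_{ij} \ln t$ for $t>C(\delta)$; combining with the trivial $N_{ij}(t) \leq C(\delta)$ for $t \leq C(\delta)$ then delivers (\ref{bnd:HighProbCount}).

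\textbf{Regret and main difficulty.} For (\ref{bnd:HighProbReg}) I would write $R_t = \sum_{i>j} N_{ij}(t) \Delta_{ij}$ (self-comparisons $(i,i)$ contribute nothing: $(1,1)$ has $\Delta_{11}=0$, and on $G$ the argmax inequality rules out $(i,i)$ for $i>1$ and $t>C(\delta)$), split the sum at $C(\delta)$, bound the early contribution by $C(\delta)\Delta^*$ (each step costs at most $\Delta^*=\max_i \Delta_i$), and apply (\ref{bnd:HighProbCount}) to the rest to obtain $\sum_{i>j} D_{ij}\Delta_{ij}\ln t$. The hard part will be the case analysis feeding the ``unified estimate'' above: when $p_{ij}$ sits close to $1/2$ while $p_{1i}$ and $p_{1j}$ are both large, one ordering of $(c,d)$ produces only a weak bound from the argmax inequality, and one must invoke the \emph{other} ordering together with the champion-pool inequality to recover $\min(\Delta_i^2,\Delta_j^2)$ in the denominator. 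A brief separate treatment of the degenerate convention $\Delta_1=0$ --- interpreting $D_{1j}$ as $4\alpha/\Delta_j^2$ rather than dividing by zero --- will also be required.
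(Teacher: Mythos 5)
Your overall strategy is the same as the paper's: condition on the event of Lemma~\ref{lem:HighProbBound}, then argue deterministically that for $t > C(\delta)$ a comparison of a pair can only be triggered while its confidence interval is still wide, which caps $N_{ij}(t)$; the regret bound then follows by charging $\Delta^*$ per step up to time $C(\delta)$ and $\Delta_{ij}$ per comparison thereafter. Your treatment of self-comparisons is right, and your observation that $\Delta_1=0$ makes $D_{1j}$ as literally defined infinite (so the pairs involving $a_1$ need the reading $D_{1j}=4\alpha/\Delta_j^2$) is a legitimate catch that the paper's own statement glosses over.

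However, the core counting step as you set it up has a gap. Your two tools are (i) the champion condition specialized to $k=1$, which bounds $N_{i1}$ rather than $N_{ij}$, and (ii) the argmax condition $u_{ji}(s)\geq u_{1i}(s)\geq p_{1i}$ paired with the good-event bound $l_{ji}(s)\leq p_{ji}$, which yields a width of at least $p_{1i}-p_{ji}$ and hence $N_{ij}(t)\leq 4\alpha\ln t/(p_{1i}-p_{ji})^2$. The latter is vacuous whenever $p_{ji}\geq p_{1i}$ (e.g.\ $p_{1i}=0.6$, $p_{ji}=0.7$), and your proposed rescue --- ``invoke the other ordering'' --- is not available: the last comparison of the pair at time $s$ is triggered by one specific ordering $(c,d)$ that you do not get to choose, and if it is the weak one, the other ordering's inequalities simply do not hold at time $s$. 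The fix, which is what the paper does, is to combine the two selection conditions on the \emph{same} interval: for $(c,d)=(i,j)$, the champion condition at $k=j$ gives $u_{ij}(s)\geq \frac{1}{2}$, while the argmax condition gives $l_{ij}(s)=1-u_{ji}(s)\leq 1-p_{1i}=p_{i1}$, so the width of $[l_{ij}(s),u_{ij}(s)]$ is at least $\frac{1}{2}-p_{i1}=\Delta_i$ unconditionally; the symmetric ordering gives $\Delta_j$, so the width always exceeds $\min\{\Delta_i,\Delta_j\}$ and no case split on $\sgn(p_{ij}-\frac{1}{2})$ is needed at all. Note also that your claim that the $k=1$ champion inequality ``already establishes'' the bound for pairs $(i,1)$ covers only the ordering $c=i$; the ordering $(c,d)=(1,i)$ requires the argmax condition $u_{i1}(s)\geq u_{11}(s)=\frac{1}{2}$ instead (it happens to give the same bound).
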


\begin{proof}

\vspace{-2mm}

Given Lemma \ref{lem:HighProbBound}, we know with probability $1-\delta$ that $p_{ij} \in [l_{ij}(t),u_{ij}(t)]$ for all $t > C(\delta)$. Let us first deal with the easy case when $i = j \neq 1$: when $t > C(\delta)$ holds, $a_i$ cannot be played against itself, since if we get $c=i$ in Algorithm \ref{alg:RUCB}, then by Lemma \ref{lem:HighProbBound} and the fact that $a_1$ is the Condorcet winner we have 
\[ u_{ii}(t) = \frac{1}{2} < p_{1i} \leq u_{1i}(t), \] 
and so $d \neq i$.

Now, let us assume that distinct arms $a_i$ and $a_j$ have been compared against each other more than $D_{ij}\ln t$ times and that $t > C(\delta)$. If $s$ is the last time $a_i$ and $a_j$ were compared against each other, we must have 

\vspace{-6mm}

\begin{align}\label{ineq:LUCB1}
& u_{ij}(s)-l_{ij}(s) = 2\sqrt{\frac{\alpha\ln s}{N_{ij}(t)}} \\
& \qquad \leq 2\sqrt{\frac{\alpha\ln t}{N_{ij}(t)}} < 2\sqrt{\frac{\alpha\ln t}{\frac{4\alpha\ln t}{\min\{\Delta_i^2,\Delta_j^2\}}}} = \min\{\Delta_i,\Delta_j\}. \nonumber
\end{align}

\vspace{-3mm}

On the other hand, for $a_i$ to have been compared against $a_j$ at time $s$, one of the following two scenarios must have happened:


\begin{figure}[!tb]
\centering
\includegraphics[width=.385\textwidth]{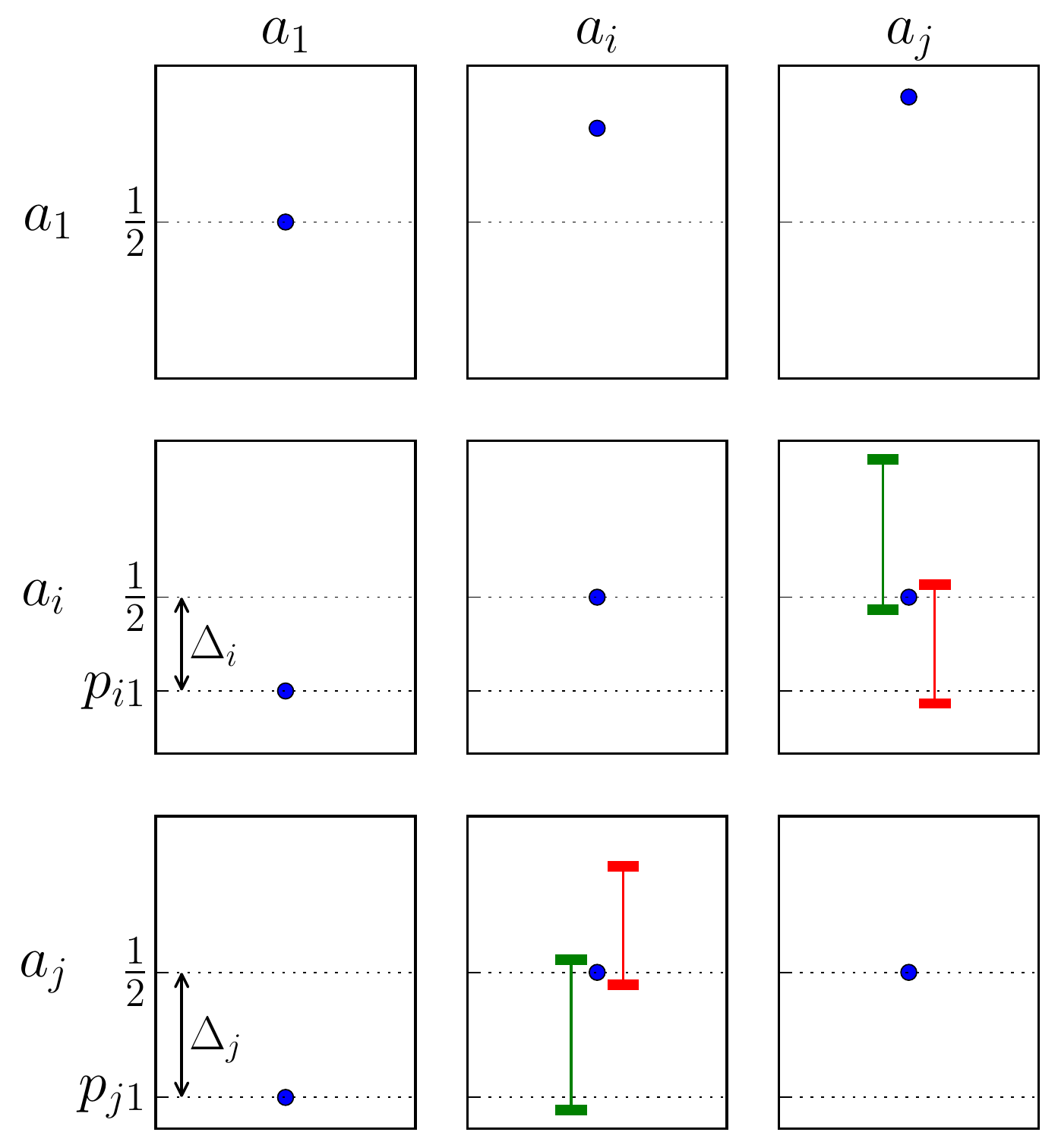}

\vspace{-3mm}

\caption{An illustration of the proof of Theorem \ref{thm:HighProbBound}. The figure shows an example of the internal state of RUCB at time $s$. The height of the dot in the block in row $a_m$ and column $a_n$ represents the comparisons probability $p_{mn}$, while the interval, where present, represents the confidence interval $[l_{mn},u_{mn}]$: we have only included them in the $(a_i,a_j)$ and the $(a_j,a_i)$ blocks of the figure because those are the ones that are discussed in the proof. Moreover, in those blocks, we have included the outcomes of two different runs: one drawn to the left of the dots representing $p_{ij}$ and $p_{ji}$, and the other to the right (the horizontal axis in these plots has no other significance). These two outcomes are included to address the dichotomy present in the proof. Note that for a given run, we must have $[l_{ji}(s),u_{ji}(s)] = [1-u_{ij}(s),1-l_{ij}(s)]$ for any time $s$, hence the symmetry present in this figure.} 


\label{fig:theorem2}
\end{figure}

\begin{itemize}[leftmargin=*,topsep=0pt,parsep=0pt,partopsep=0pt]
	\item[I.] In Algorithm \ref{alg:RUCB}, we had $c=i$ and $d=j$, in which case both of the following inequalities must hold:
	\begin{itemize}[leftmargin=*] 
		\item[a.] $u_{ij}(s) \geq \frac{1}{2}$, since otherwise $c$ could not have been set to $i$ by Line 5 of Algorithm \ref{alg:RUCB}, and
		\item[b.] $l_{ij}(s) = 1-u_{ji}(s) \leq 1-p_{1i} = p_{i1}$, since we know that $p_{1j} \leq u_{1i}(t)$, by Lemma \ref{lem:HighProbBound} and the fact that $t>C(\delta)$, and for $d=j$ to be satisfied, we must have $u_{1i}(t) \leq u_{ji}(t)$ by Line 6 of Algorithm \ref{alg:RUCB}.
	\end{itemize}
	From these two inequalities, we can conclude

	\vspace{-2mm}

	\begin{equation}\label{ineq:LUCB2a} u_{ij}(s) - l_{ij}(s) \geq \frac{1}{2}-p_{i1} = \Delta_i. \end{equation}


	This inequality is illustrated using the lower right confidence interval in the $(a_i,a_j)$ block of Figure \ref{fig:theorem2}, where the interval shows $[l_{ij}(s),u_{ij}(s)]$ and the distance between the dotted lines is $\frac{1}{2}-p_{i1}$.

	\item[II.] In Algorithm \ref{alg:RUCB}, we had $c=j$ and $d=i$, in which case swapping $i$ and $j$ in the above argument gives

	\vspace{-2mm}

	\begin{equation}\label{ineq:LUCB2b} u_{ji}(s) - l_{ji}(s) \geq \frac{1}{2}-p_{j1} = \Delta_j. \end{equation}


	Similarly, this is illustrated using the lower left confidence interval in the $(a_j,a_i)$ block of Figure \ref{fig:theorem2}, where the interval shows $[l_{ji}(s),u_{ji}(s)]$ and the distance between the dotted lines is $\frac{1}{2}-p_{j1}$.
\end{itemize}
Putting \eqref{ineq:LUCB2a} and \eqref{ineq:LUCB2b} together with \eqref{ineq:LUCB1} yields a contradiction, so with probability $1-\delta$ we cannot have $N_{ij}$ be larger than both $C(\delta)$ and $D_{ij}\ln t$.

This gives us \eqref{bnd:HighProbCount}, from which \eqref{bnd:HighProbReg} follows by allowing for the largest regret, $\Delta^*$, to occur in each of the first $C(\delta)$ steps of the algorithm and adding the regret accrued by $D_{ij}\ln t$ comparisons between $a_i$ and $a_j$. \qedhere

\end{proof}

\begin{figure}[!b]


\centering
\includegraphics[width=.485\textwidth]{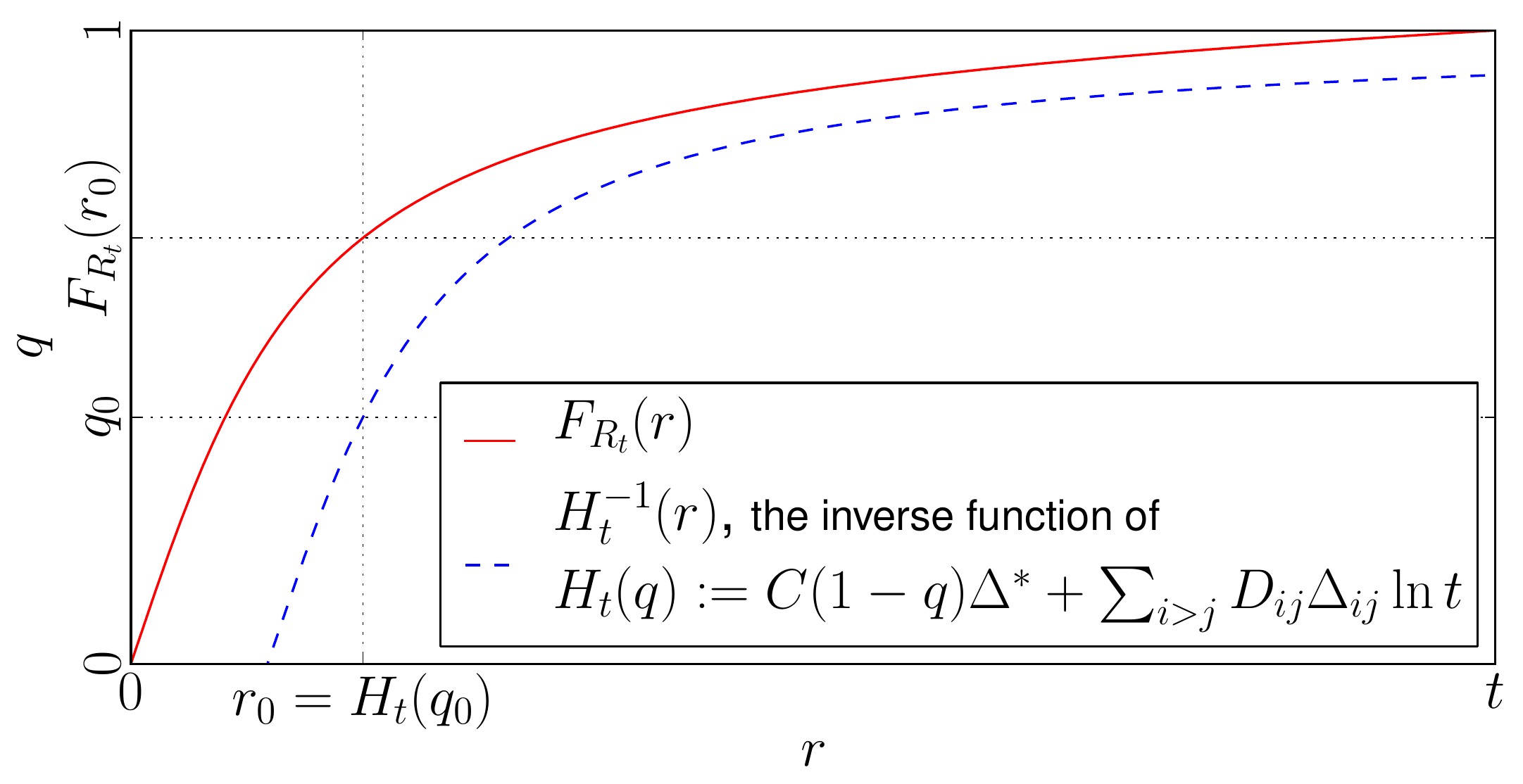}

\vspace{-3mm}

\caption{A schematic graph illustrating the proof of Theorem \ref{thm:ExpBound}. Note that the expression for $H_t(q)$ is extracted from \eqref{bnd:HighProbReg}, which also implies that $H_t^{-1}$ is necessarily below $F_{R_t}$: formulated in terms of CDFs, \eqref{bnd:HighProbReg} states that $F_{R_t}\left(H_t(q_0)\right) > q_0 = H_t^{-1}\left(H_t(q_0)\right)$, where $q_0=1-\delta_0$ is a quantile. From this, we can conclude that $F_{R_t}(r) > H_t^{-1}(r)$ for all $r$.}

\vspace{-3mm}

\label{fig:theorem3}
\end{figure}

Next, we prove our expected regret bound:

\begin{theorem}\label{thm:ExpBound}
Given $\alpha > 1$, the expected regret accumulated by RUCB after $t$ iterations is bounded by

\vspace{-7mm}

\begin{align}\label{bnd:ExpReg} 
\mathbb{E}[R_t] & \leq \Delta^*\left(\frac{(4\alpha-1)K^2}{2\alpha-1}\right)^{\frac{1}{2\alpha-1}} \frac{2\alpha-1}{2\alpha-2} \nonumber \\
 & \qquad +\sum_{i>j}2\alpha\frac{\Delta_i+\Delta_j}{\min\{\Delta_i^2,\Delta_j^2\}}\ln t. 
\end{align}

\vspace{-5mm}

\end{theorem}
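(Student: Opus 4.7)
The plan is to deduce the expected regret bound by integrating the tail probability supplied by the high-probability bound of Theorem~\ref{thm:HighProbBound}. Writing $A := \Delta^*\left(\frac{(4\alpha-1)K^2}{2\alpha-1}\right)^{1/(2\alpha-1)}$ and $B := \sum_{i>j} D_{ij}\Delta_{ij}\ln t$, Theorem~\ref{thm:HighProbBound} says that for every fixed $\delta>0$,
\begin{equation*}
P\bigl(R_t \geq A\delta^{-1/(2\alpha-1)} + B\bigr) \leq \delta.
\end{equation*}
The key observation is that this is exactly the ``inverted'' form one gets by reparameterising $\delta$ in terms of the threshold $r = A\delta^{-1/(2\alpha-1)}+B$: solving gives $\delta(r) = \left(A/(r-B)\right)^{2\alpha-1}$, so for $r > A+B$ we obtain $P(R_t \geq r) \leq \left(A/(r-B)\right)^{2\alpha-1}$, while for $r \leq A+B$ the trivial bound $P(R_t \geq r) \leq 1$ suffices. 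This is precisely the relationship depicted in Figure~\ref{fig:theorem3}, where the inverse $H_t^{-1}$ sits below the CDF $F_{R_t}$ pointwise in $r$.

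With this pointwise tail bound in hand, I would invoke the standard identity $\mathbb{E}[R_t] = \int_0^\infty P(R_t \geq r)\,dr$ and split the integral at $r = A+B$:
\begin{equation*}
\mathbb{E}[R_t] \;\leq\; (A+B) + \int_{A+B}^\infty \left(\frac{A}{r-B}\right)^{2\alpha-1} dr.
\end{equation*}
Substituting $u = r-B$ and using the hypothesis $\alpha>1$ (so that the exponent $2\alpha-1>1$ renders the tail integrable), the remaining integral evaluates to $A/(2\alpha-2)$. Combining terms yields
\begin{equation*}
\mathbb{E}[R_t] \;\leq\; A\left(1 + \frac{1}{2\alpha-2}\right) + B \;=\; A\cdot\frac{2\alpha-1}{2\alpha-2} + B,
\end{equation*}
which, after substituting the definitions of $A$ and $B$ (and recalling $D_{ij}\Delta_{ij} = \frac{2\alpha(\Delta_i+\Delta_j)}{\min\{\Delta_i^2,\Delta_j^2\}}$), coincides exactly with the bound claimed in \eqref{bnd:ExpReg}.

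The main (and essentially only) subtlety is justifying the inversion step: Theorem~\ref{thm:HighProbBound} is a statement ``for every $\delta$, a certain event has probability at most $\delta$'', and I need to turn this into a pointwise-in-$r$ tail bound on the scalar random variable $R_t$. This is legitimate because for each fixed $r>A+B$ I simply apply Theorem~\ref{thm:HighProbBound} with the particular choice $\delta = \delta(r)$; no union bound over $\delta$ is needed. The strengthening of the assumption from $\alpha > 1/2$ (used in Theorem~\ref{thm:HighProbBound}) to $\alpha > 1$ is exactly what is required for the tail $r^{-(2\alpha-1)}$ to be integrable at infinity, so the hypothesis of Theorem~\ref{thm:ExpBound} matches precisely what the calculation demands.
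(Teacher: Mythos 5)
Your proof is correct and follows essentially the same route as the paper: both arguments invert the high-probability bound of Theorem~\ref{thm:HighProbBound} into a statement about the distribution of $R_t$ and then integrate over the confidence parameter, with your tail-probability identity $\mathbb{E}[R_t]=\int_0^\infty P(R_t\geq r)\,dr$ being the change-of-variables dual of the paper's quantile identity $\mathbb{E}[X]=\int_0^1 F_X^{-1}(q)\,dq$. The resulting integrals agree term for term (your $A/(2\alpha-2)$ plus the $A$ from the trivial region reproduces the paper's $L\Delta^*\cdot\frac{2\alpha-1}{2\alpha-2}$), and your remark that $\alpha>1$ is exactly what makes the tail integrable matches the paper's use of that hypothesis.
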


\begin{proof}
We can obtain the bound in \eqref{bnd:ExpReg} from \eqref{bnd:HighProbReg} by integrating with respect to $\delta$ from $0$ to $1$. This is because given any one-dimensional random variable $X$ with CDF $F_X$, we can use the identity $\mathbb{E}[X] = \int_0^1 F_X^{-1}(q)dq$. In our case, $X=R_t$ for a fixed time $t$ and, as illustrated in Figure \ref{fig:theorem3}, we can deduce from \eqref{bnd:HighProbReg} that $F_{R_t}(r) > H_t^{-1}(r)$, which gives the bound
\[ F_{R_t}^{-1}(q) < H_t(q) = C(1-q)\Delta^*+\sum_{i>j} D_{ij} \Delta_{ij} \ln t. \]

\vspace{-3mm}

Now, assume that $\alpha > 1$. To derive \eqref{bnd:ExpReg} from the above inequality, we need to integrate the righthand side, and since it is only the first term in the summand that depends on $q$, that is all we need to integrate. To do so, recall that $C(\delta) := \left(\frac{(4\alpha-1)K^2}{(2\alpha-1)\delta}\right)^{\frac{1}{2\alpha-1}}$, so to simplify notation, we define $L := \left(\frac{(4\alpha-1)K^2}{2\alpha-1}\right)^{\frac{1}{2\alpha-1}}$. Now, we can carry out the integration as follows, beginning by using the substitution $1-q=\delta$, $dq = -d\delta$:
\begin{align*}
& \int_{q=0}^1 C(1-q) dq = \int_{\delta=1}^0 -C(\delta) d\delta \\
& = \int_0^1 \left(\frac{(4\alpha-1)K^2}{(2\alpha-1)\delta}\right)^{\frac{1}{2\alpha-1}} d\delta = L \int_0^1 \delta^{-\frac{1}{2\alpha-1}}d\delta \\
& = L \left[ \frac{\delta^{1-\frac{1}{2\alpha-1}}}{1-\frac{1}{2\alpha-1}} \right]_0^1 = \left(\frac{(4\alpha-1)K^2}{2\alpha-1}\right)^{\frac{1}{2\alpha-1}}\frac{2\alpha-1}{2\alpha-2}. \qedhere
\end{align*}
\end{proof}


\begin{remark}
\emph{Note that RUCB uses the upper-confidence bounds (Line 3 of Algorithm \ref{alg:RUCB}) introduced in the original version of UCB \cite{auer2002ucb} (up to the $\alpha$ factor). Recently refined upper-confidence bounds (such as UCB-V \cite{audibert2009ucbv} or KL-UCB \cite{cappe2012klucb}) have improved performance for the regular $K$-armed bandit problem. However, in our setting the arm distributions are Bernoulli and the comparison value is 1/2. Thus, since we have $2\Delta_i^2 \leq kl(p_{1,i}, 1/2) \leq 4\Delta_i^2$ (where $kl(a,b)=a\log\frac{a}{b}+(1-a)\log\frac{1-a}{1-b}$ is the KL divergence between Bernoulli distributions with parameters $a$ and $b$), we deduce that using KL-UCB instead of UCB does not improve the leading constant in the logarithmic term of the regret by a numerical factor of more than 2.}
\end{remark}

\section{Experiments}
\label{sec:experiments}


%
%
%


\begin{figure*}[!t]
\centering
\includegraphics[width=.32\textwidth]{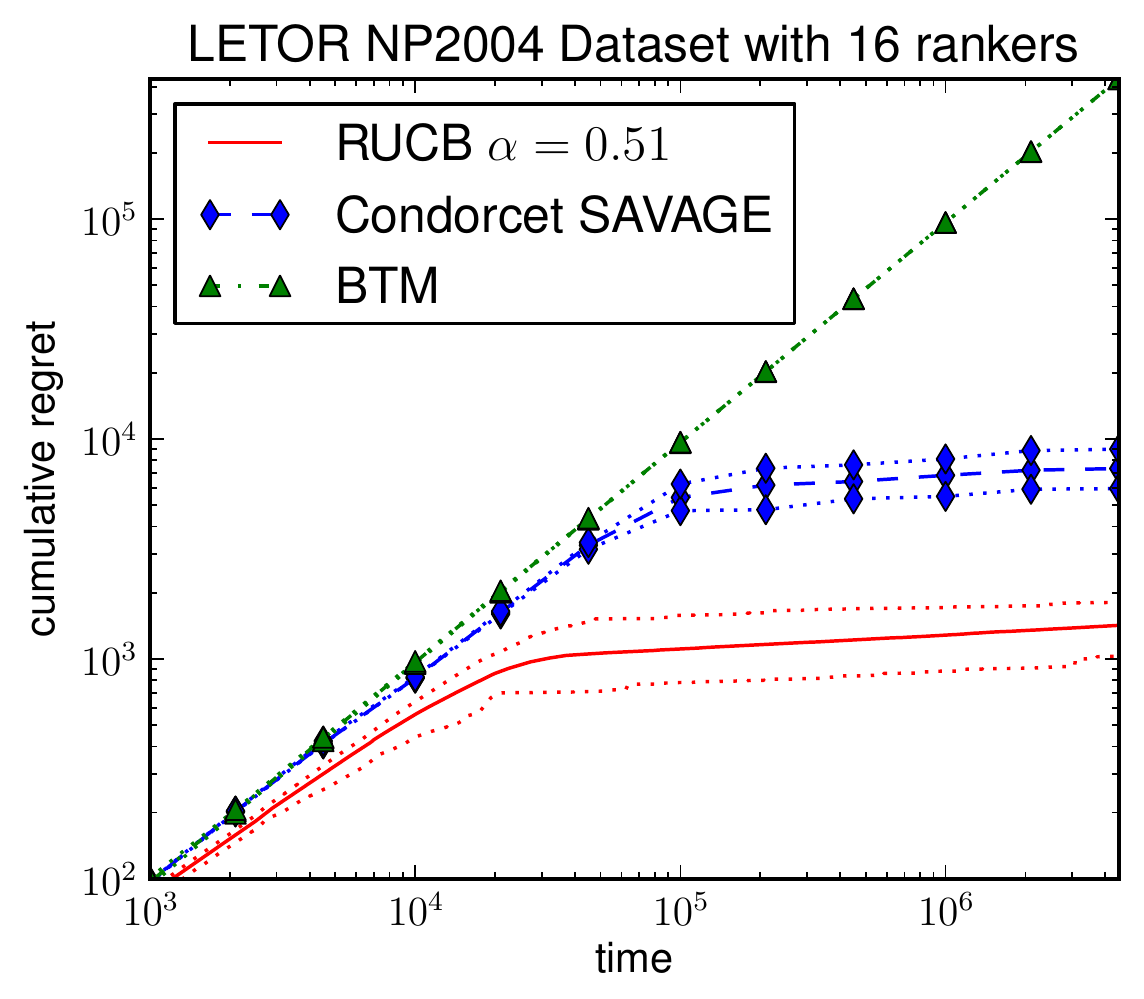}
\includegraphics[width=.32\textwidth]{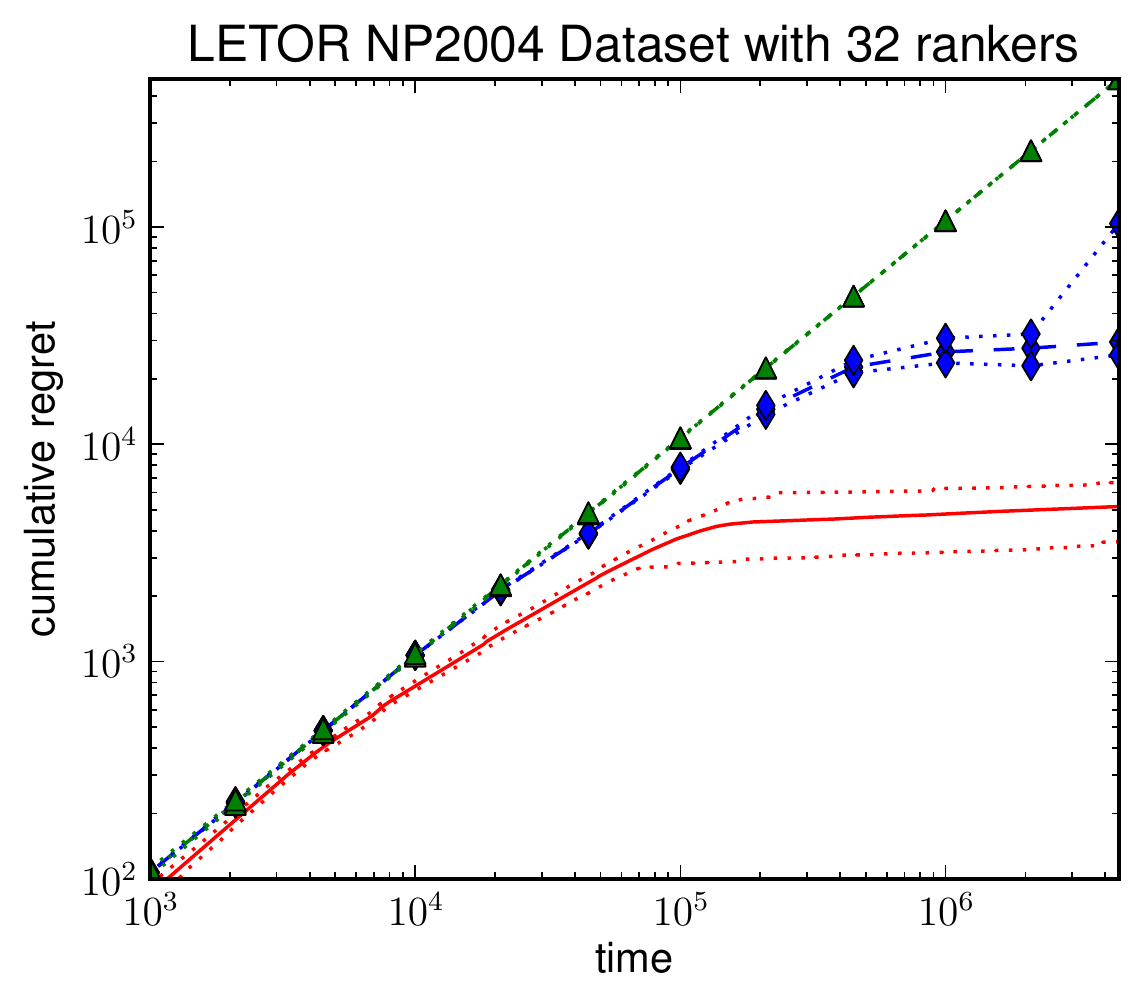}
\includegraphics[width=.32\textwidth]{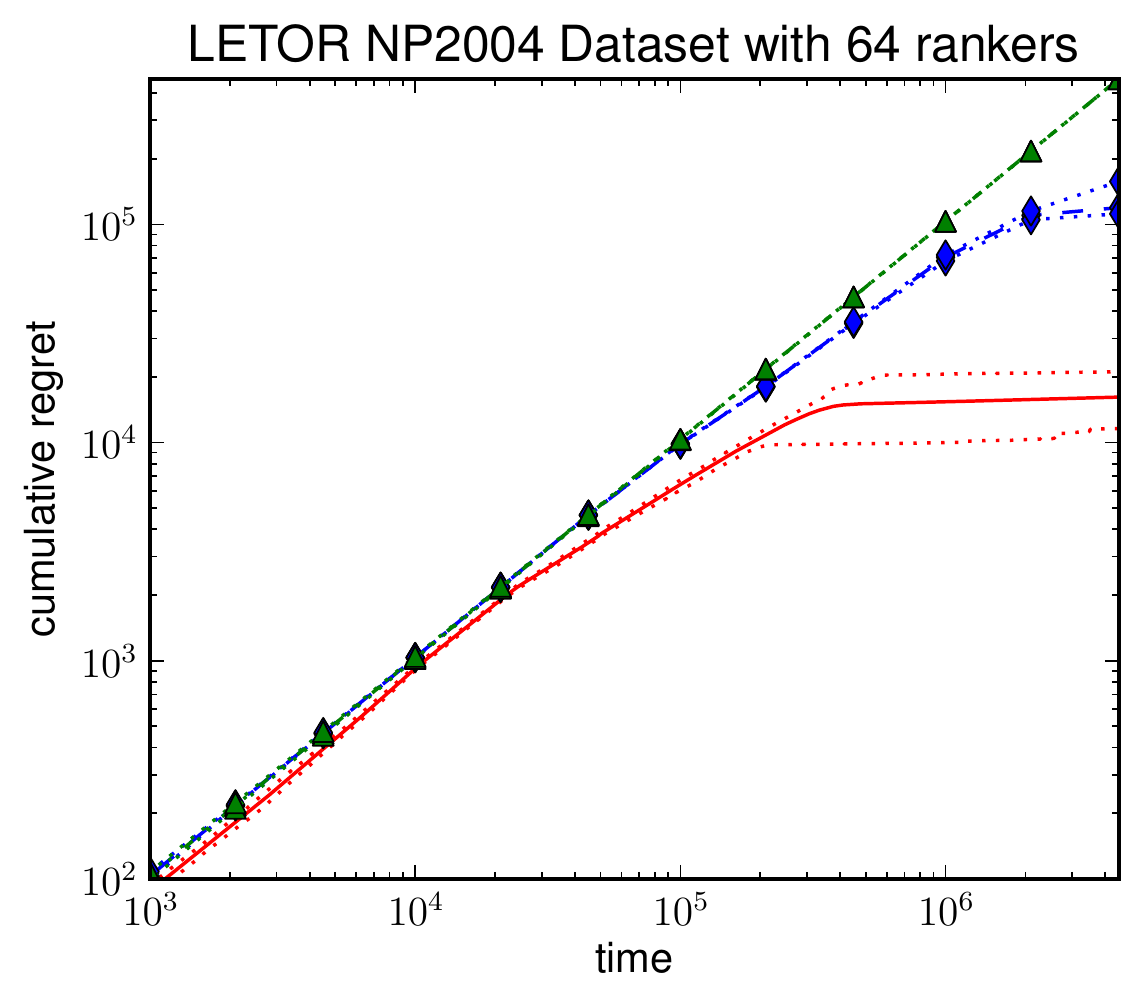}
\includegraphics[width=.32\textwidth]{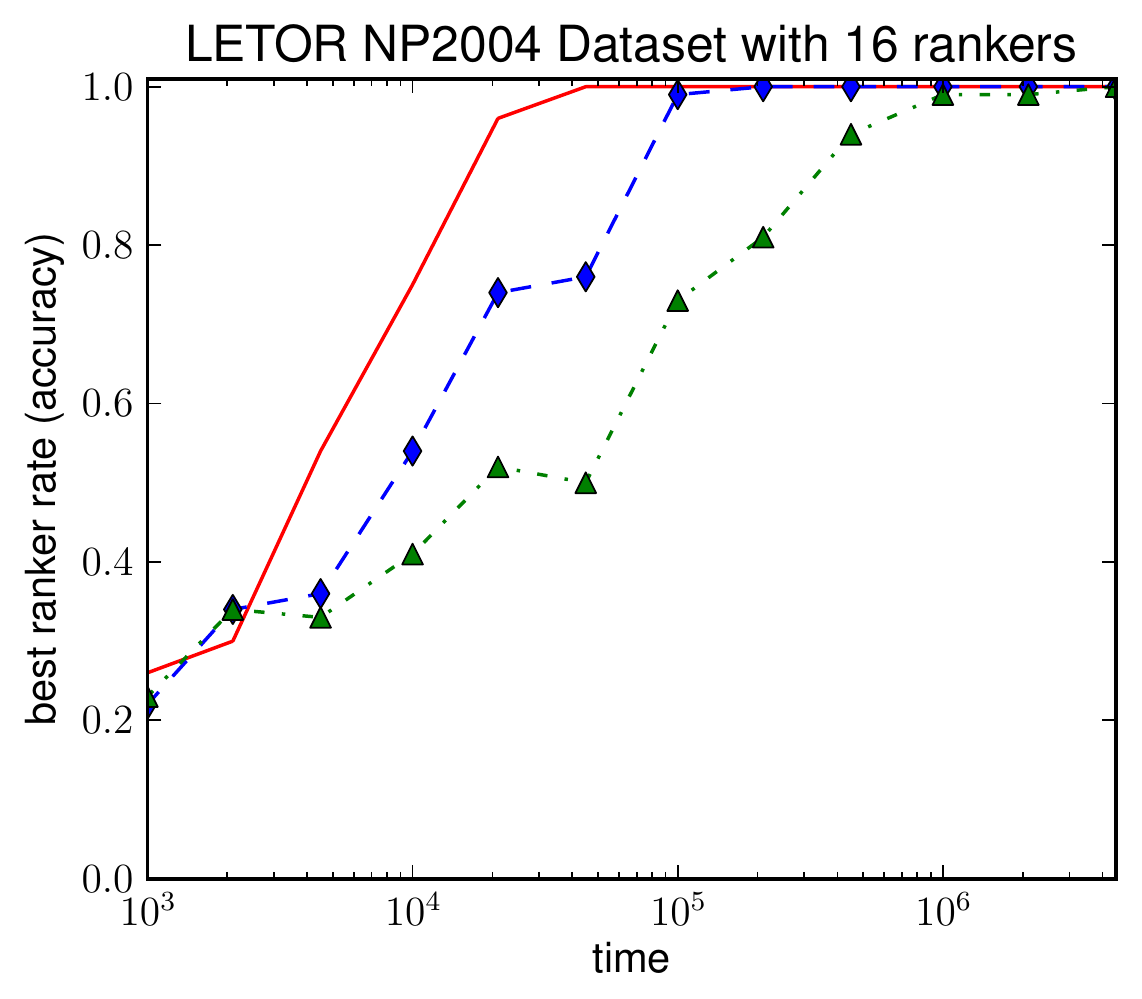}
\includegraphics[width=.32\textwidth]{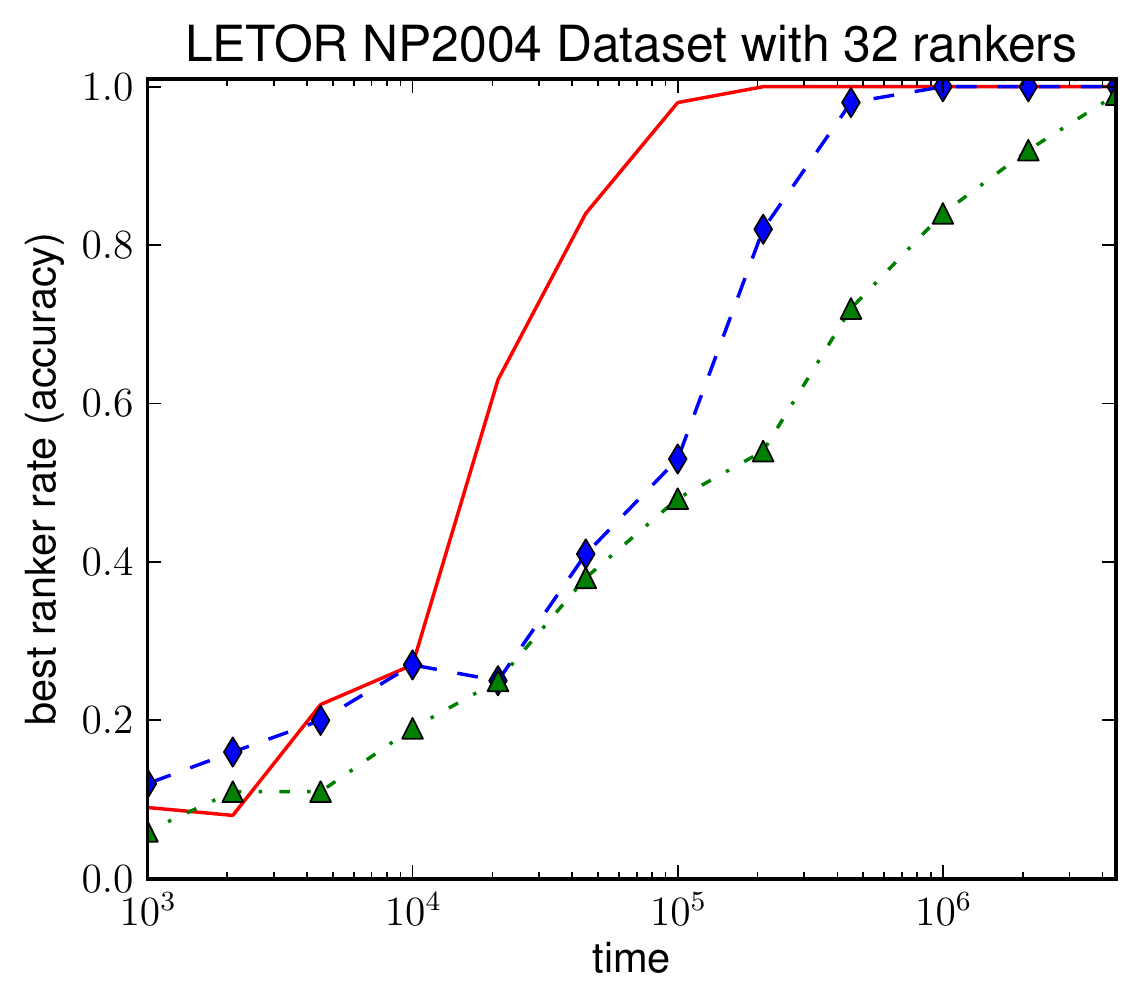}
\includegraphics[width=.32\textwidth]{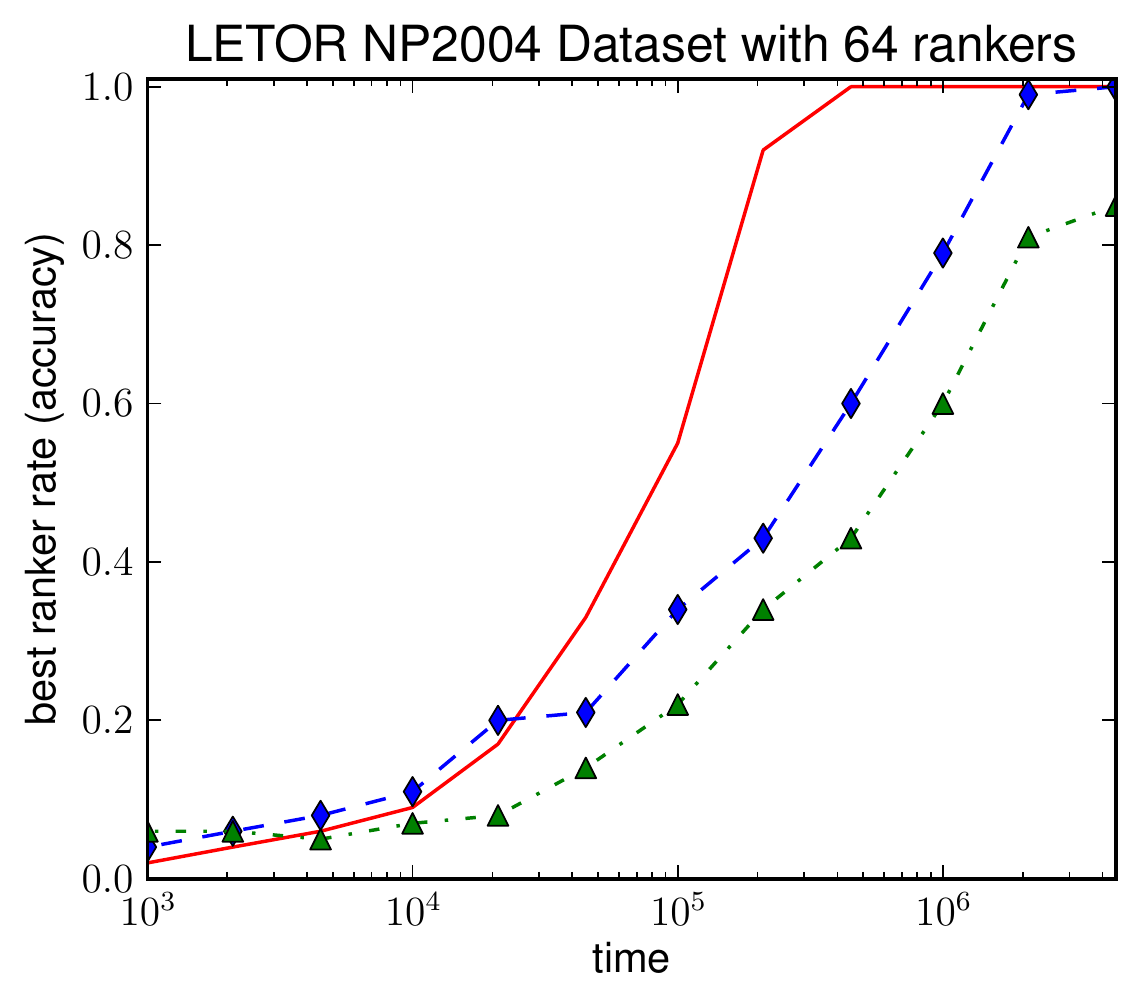}

\vspace{-3mm}

\caption{Average cumulative regret and accuracy for 100 runs of BTM, Condorcet SAVAGE and RUCB with $\alpha=0.51$ applied to three $K$-armed dueling bandit problems with $K=16,32,64$. In the top row of plots, both axes use log scales, and the dotted curves signify best and worst regret performances; in the bottom plots, only the x-axis uses a log scale. 
}
\label{fig:accuracy-regret}

\vspace{-1mm}

\end{figure*}

To evaluate RUCB, we apply it to the problem of \emph{ranker evaluation} from the field of \emph{information retrieval} (IR)~\citep{mann:intr08}.  A ranker is a function that takes as input a user's search query and ranks the documents in a collection according to their relevance to that query.  Ranker evaluation aims to determine which among a set of rankers performs best.  One effective way to achieve this is to use \emph{interleaved comparisons}~\citep{radlinski2008:how}, which interleave the documents proposed by two different rankers and presents the resulting list to the user, whose resulting click feedback is used to infer a noisy preference for one of the rankers.  Given a set of $K$ rankers, the problem of finding the best ranker can then be modeled as a $K$-armed dueling bandit problem, with each arm corresponding to a ranker.


Our experimental setup is built on real IR data, namely the LETOR NP2004 dataset~\citep{letor}. 
Using this data set, we create a set of 64 rankers, each corresponding to a ranking feature provided in the data set, e.g., PageRank.  The ranker evaluation task thus corresponds to determining which single feature constitutes the best ranker~\citep{hofmann:irj13}. 

To compare a pair of rankers, we use \emph{probabilistic interleave} (PI)~\citep{hofmann11:probabilistic}, a recently developed method for interleaved comparisons. To model the user's click behavior on the resulting interleaved lists, we employ a probabilistic user model~\citep{,hofmann11:probabilistic,craswell08:experimental} that uses as input the manual labels (classifying documents as relevant or not for given queries) provided with the LETOR NP2004 dataset. Queries are sampled randomly and clicks are generated probabilistically by conditioning on these assessments in a way that resembles the behavior of an actual user \citep{guo09:tailoring,guo09:efficient}. 

Following \citep{YueJoachims:2011}, we first used the above approach to estimate the comparison probabilities $p_{ij}$ for each pair of rankers and then used these probabilities to simulate comparisons between rankers. More specifically, we estimated the full preference matrix by performing $4000$ interleaved comparisons on each pair of the $64$ feature rankers included in the LETOR dataset.


We evaluated RUCB, Condorcet SAVAGE and BTM using randomly chosen subsets from the pool of $64$ rankers, yielding $K$-armed dueling bandit problems with $K \in \{16,32,64\}$. For each set of rankers, we performed 100 independent runs of each algorithm for a maximum of 4.5 million iterations. For RUCB we set $\alpha=0.51$, which approaches the limit of our high-probability theoretical results, i.e., $\alpha>0.5$ as in Theorem \ref{thm:HighProbBound}. We did not include an evaluation of IF, since both BTM and Condocet SAVAGE were shown to outperform it \cite{YueJoachims:2011,Urvoy:2013}. 

Since BTM and SAVAGE require the exploration horizon as input, we ran $\textup{BTM}_T$ and $\textup{CSAVAGE}_T$ for various horizons $T$ ranging from 1000 to 4.5 million. In the top row of plots in Figure \ref{fig:accuracy-regret}, the markers on the green and the blue curves show the regret accumulated by $\textup{BTM}_T$ and $\textup{CSAVAGE}_T$ in the first $T$ iteration of the algorithm for each of these horizons. Thus, each marker corresponds, not to the continuation of the runs that produced the previous marker, but to new runs conducted with a larger $T$. 

Since RUCB is horizonless, we ran it for 4.5 million iterations and plotted the cumulative regret, as shown using the red curves in the same plots. In the case of all three algorithms, the solid line shows the expected cumulative regret averaged across all 100 runs and the dotted lines show the minimum and the maximum cumulative regret that was observed across runs. Note that these plots are in log-log scale.

The bottom plots in Figure \ref{fig:accuracy-regret} show the accuracy of all three algorithms across 100 runs, computed at the same times as the exploration horizons used for BTM and SAVAGE in the regret plots. Note that these plots are in lin-log scale.

These results clearly demonstrate that RUCB identifies the best arm more quickly, since it asymptotically accumulates 5 to 10 times less regret than Condorcet SAVAGE, while reaching higher levels of accuracy in roughly $20\%$ of the time as Condorcet SAVAGE, all without knowing the horizon $T$. The contrast is even more stark when comparing to BTM.

\section{Conclusions}
\label{sec:conclusions}


This paper proposed a new method called Relative Upper Confidence Bound (RUCB) for the \emph{$K$-armed dueling bandit problem} that extends the Upper Confidence Bound (UCB) algorithm to the relative setting by using optimistic estimates of the pairwise probabilities to choose a potential champion and conducting regular UCB with the champion as the benchmark.  

We proved finite-time high-probability and expected regret bounds of order $\mathcal O(\log t)$ for our algorithm and evaluated it empirically in an information retrieval application. Unlike existing results, our regret bounds hold for all time steps, rather than just a specific horizon $T$ input to the algorithm.  Furthermore, they rely on less restrictive assumptions or have better multiplicative constants than existing methods.  Finally, the empirical results showed that RUCB greatly outperforms state-of-the-art methods.



In future work, we will consider two extensions to this research. First, building off extensions of UCB to the continuous bandit setting \cite{Srinivas:2010,Bubeck:2011,Munos:2011,deFreitas:2012,Valko:2013}, we aim to extend RUCB to the continuous dueling bandit setting, without a convexity assumption as in \cite{Yue:2009}. Second, building off Thompson Sampling \cite{thompson1933likelihood,Agrawal:2012,Kauffman:2012}, an elegant and effective sampling-based alternative to UCB, we will investigate whether a sampling-based extension to RUCB would be amenable to theoretical analysis. Both these extensions involve overcoming not only the technical difficulties present in the regular bandit setting, but also those that arise from the two-stage nature of RUCB. 

\section*{Acknowledgments} 

This research was partially supported by
the European Community's Seventh Framework Programme (FP7/2007-2013) under
grant agreement nr 288024 (LiMoSINe project),
the Netherlands Organisation for Scientific Research (NWO)
under project nrs
640.\-004.\-802, 
727.\-011.\-005, 
612.001.116, 
HOR-11-10, 
the Center for Creation, Content and Technology (CCCT),
the QuaMerdes project funded by the CLARIN-nl program,
the TROVe project funded by the CLARIAH program,
the Dutch national program COMMIT,
the ESF Research Network Program ELIAS,
the Elite Network Shifts project funded by the Royal Dutch Academy of Sciences (KNAW),
the Netherlands eScience Center under project number 027.012.105
and
the Yahoo! Faculty Research and Engagement Program.

\clearpage

\bibliographystyle{icml2014}
\setlength{\bibsep}{7pt}
\bibliography{icml2014-rucb}

\clearpage

\section{Appendix}

\begin{figure}[!b]
\centering
\includegraphics[width=.485\textwidth]{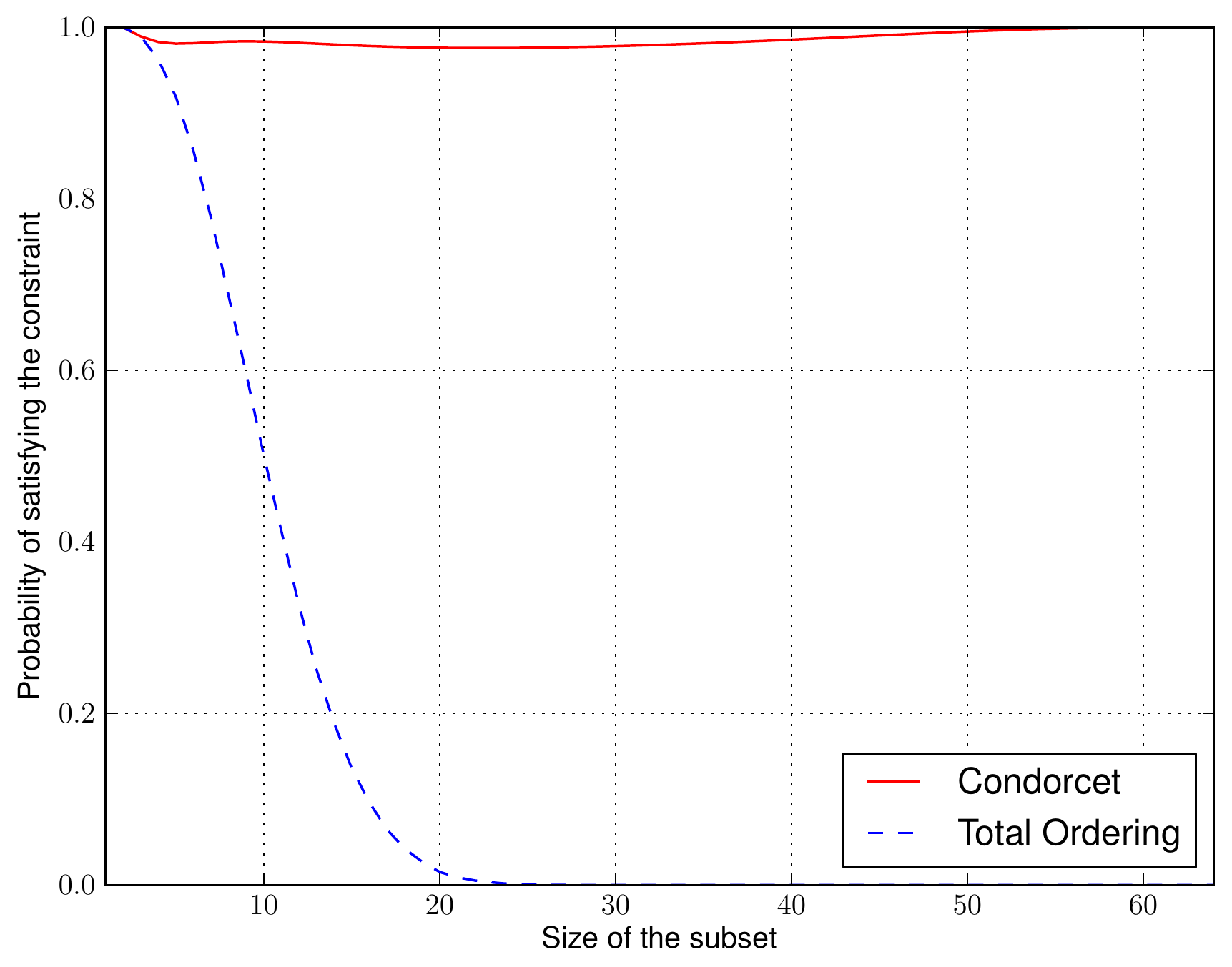}
\caption{The probability that the Condorcet and the total ordering assumptions hold for subsets of the feature rankers. The probability is shown as a function of the size of the subset.}
\label{fig:prob-condorcet}
\end{figure}

\begin{figure*}[!t]
\centering
\includegraphics[width=\textwidth]{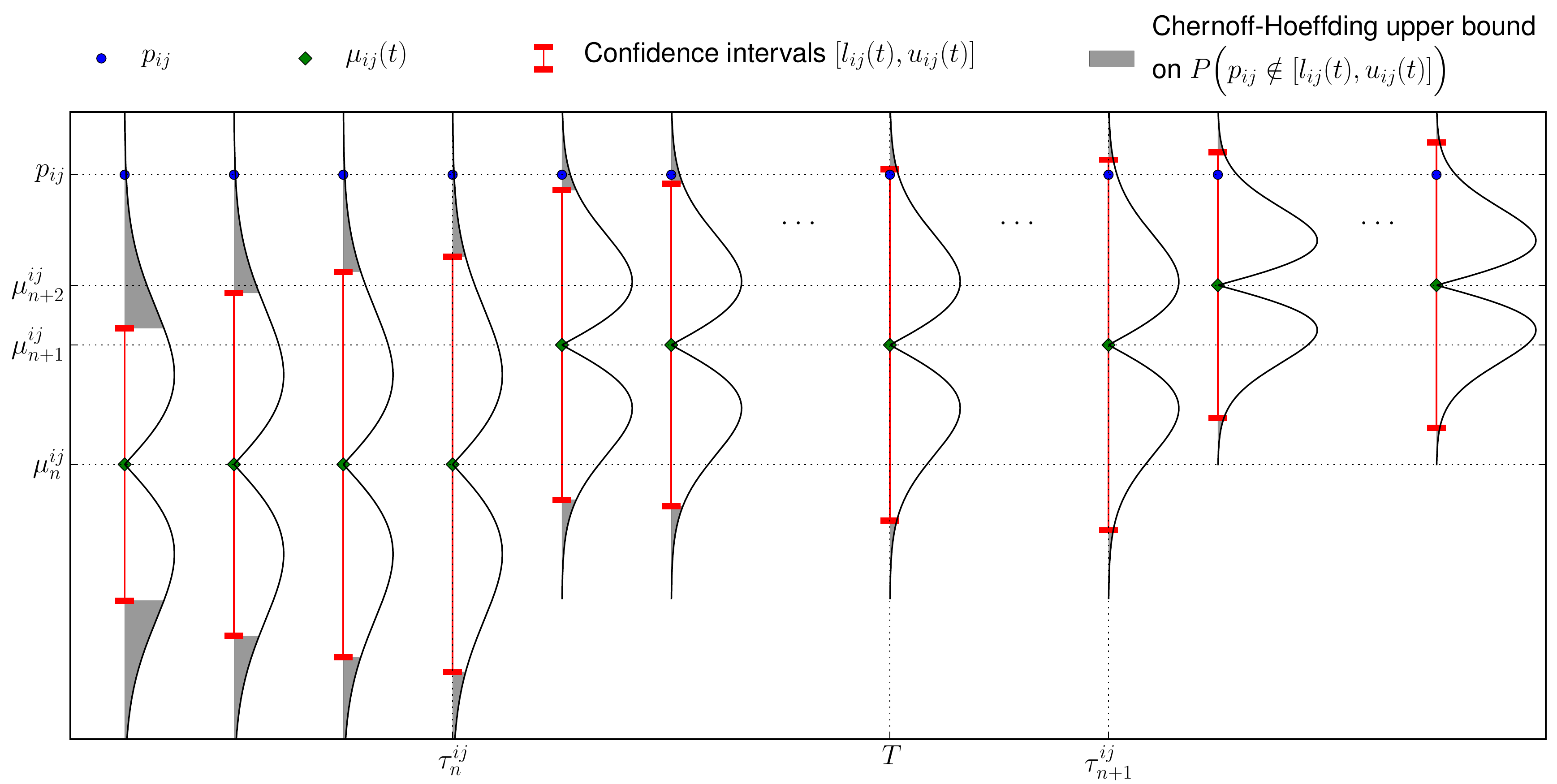}


\caption{An illustrations of the idea behind Lemma \ref{lem:HighProbBound} using an example of how the confidence intervals of a single pair of arms $(a_i,a_j)$, and their relation to the comparison probability $p_{ij}$, might evolve over time.  The time-step $\tau^{ij}_m$ denotes the $m^{\textup{th}}$ time when the arms $a_i$ and $a_j$ were chosen by RUCB to be compared against each other. We also define $\mu^{ij}_m := \mu_{ij}(\tau^{ij}_m)$. The time $T$ is when the confidence intervals $[l_{ij}(t),u_{ij}(t)]$ begin to include $p_{ij}$. The lemma then states that with probability $1-\delta$, we have $T \leq C(\delta)$. \\ \vspace{-2mm} \\ Moreover, for each time-step, the area of the shaded region under the vertical graphs is the bound given by the Chernoff-Hoeffding (CH) bound on the probability that the confidence interval will not contain $p_{ij}$. Note that the CH bound has the form $e^{-(x-\mu^{ij}_n)^2}$ and so in order for this number to be the area under a graph (hence making it easier to illustrate in a figure), we have drawn the derivative of this function, $f^{ij}_n(x) := |x-\mu^{ij}_n| e^{-(x-\mu^{ij}_n)^2}$, which is why the graphs are equal to $0$ in the middle. Note that this does not mean that $\mu^{ij}_n$ has very low probability of being close to $p_{ij}$: the graphs drawn here are not the PDFs of the posteriors, but simply a manifestation of the bound given by the Chernoff-Hoeffding bound. More specifically, the property that they satisfy is that $P\Big(p_{ij} \notin [l_{ij}(t),u_{ij}(t)]\Big) \leq \int_{-\infty}^{l_{ij}(t)} f^{ij}_{N_{ij}(t)}(x)dx + \int_{u_{ij}(t)}^{\infty} f^{ij}_{N_{ij}(t)}(x)dx$.}
\label{fig:lemma1Appendix}
\end{figure*}



Here we provide some details that were alluded to in the main body of the paper.

\subsection{The Condorcet Assumption}

As mentioned in Section \ref{sec:relatedwork}, IF and BTM require the comparison probabilities $p_{ij}$ to satisfy certain difficult to verify conditions. Specifically, IF and BTM require a \emph{total ordering} $\{a_1,\ldots,a_K\}$ of the arms to exist such that $p_{ij} > \frac{1}{2}$ for all $i < j$. Here we provide evidence that this assumption is often violated in practice. By contrast, the algorithm we propose in Section \ref{sec:algorithm} makes only the Condorcet assumption, which is implied by the total ordering assumption of IF and BTM.





In order to test how stringent an assumption the existence of a Condorcet winner is compared the total ordering assumption, we estimated the probability of each assumption holding in our ranker evaluation application. 
Using the same preference matrix as in our experiments in Section \ref{sec:experiments}, we computed for each $K=1,\ldots,64$ the probability $P_K$ that a given $K$-armed dueling bandit problem obtained from considering $K$ of our $64$ feature rankers would have a Condorcet winner as follows: first, we calculated the number of $K$-armed dueling bandit that have a Condorcet winner by calculating for each feature ranker $r$ how many $K$-armed duelings bandits it can be the Condorcet winner of: for each $r$, this is equal to $\binom{N_r}{K}$, where $N_r$ is the number rankers that $r$ beats; next, we divided this total number of $K$-armed dueling bandit with a Condorcet winner by $\binom{64}{K}$, which is the number of all $K$-armed dueling bandit that one could construct from these $64$ rankers. 

The probabilities $P_K$, plotted as a function of $K$ in Figure \ref{fig:prob-condorcet} (the red curve), were all larger than $0.97$. The same plot also shows an estimate of the probability that the total ordering assumption holds for a given $K$ (the blue curve), which was obtained by randomly selecting $100,000$ $K$-armed bandits and searching for ones that satisfy the total ordering assumption. As can be seen from Figure \ref{fig:prob-condorcet}, as $K$ grows the probability that the total ordering assumption holds decreases rapidly. This is because there exist cyclical relationships between these feature rankers and as soon as the chosen subset of feature rankers contains one of these cycles, it fails to satisfy the total ordering condition. By contrast, the Condorcet assumption will still be satisfied as long as the cycle does not include the Condorcet winner. Moreover, because of the presence of these cycles, the probability that the Condorcet assumption holds decreases initially as $K$ increases, but then increases again because the number of all possible $K$-armed dueling bandit decreases as $K$ approaches $64$.

Furthermore, in addition to the total ordering assumption, IF and BTM each require a form of \emph{stochastic transitivity}.  In particular, IF requires \emph{strong stochastic transitivity}; for any triple $(i,j,k)$, with $i < j < k$, the following condition needs to be satisfied:
\[ p_{ik} \geq \max\{p_{ij},p_{jk}\}.  \]
BTM requies the less restrictive \emph{relaxed stochastic transitivity}, i.e., that there exists a number $\gamma \geq 1$ such that for all pairs $(j,k)$ with $1 < j < k$, we have
\[ \gamma p_{1k} \geq \max\{p_{1j},p_{jk}\}. \]
As pointed out in \cite{YueJoachims:2011}, strong stochastic transitivity is often violated in practice, a phenomenon also observed in our experiments: for instance, all of the $K$-armed dueling bandit on which we experimented require $\gamma > 1$.

Even though BTM permits a broader class of $K$-armed dueling bandit problems, it requires $\gamma$ to be explicitly passed to it as a parameter, which poses substantial difficulties in practice. If $\gamma$ is underestimated, the algorithm can in certain circumstances be misled with high probability into choosing the Borda winner instead of the Condorcet winner, e.g., when the Borda winner has a larger average advantage over the remaining arms than the Condorcet winner. On the other hand, though overestimating $\gamma$ does not cause the algorithm to choose the wrong arm, it nonetheless results in a severe penalty, since it makes the algorithm much more exploratory, yielding the $\gamma^7$ term in the upper bound on the cumulative regret, as discussed in Section \ref{sec:relatedwork}.

\subsection{Proof of Lemma \ref{lem:HighProbBound}}

In this section, we prove Lemma \ref{lem:HighProbBound}, whose statement is repeated here for convenience. Recall from Section \ref{sec:theory} that we assume without loss of generality that $a_1$ is the optimal arm. Moreover, given any $K$-armed dueling bandit algorithm, we define $w_{ij}(t)$ to be the number of times arm $a_i$ has beaten $a_j$ in the first $t$ iterations of the algorithm. We also define $u_{ij}(t) := \frac{w_{ij}(t)}{w_{ij}(t)+w_{ji}(t)} + \sqrt{\frac{\alpha\ln t}{w_{ij}(t)+w_{ji}(t)}}$, where $\alpha$ is any positive contant, and $l_{ij}(t) := 1-u_{ji}(t)$. Moreover, for any $\delta > 0$, define $C(\delta) := \left(\frac{(4\alpha-1)K^2}{(2\alpha-1)\delta}\right)^{\frac{1}{2\alpha-1}}$.

{\bf Lemma \ref{lem:HighProbBound}.}~\emph{Let $\vP := \left[ p_{ij} \right]$ be the preference matrix of a $K$-armed dueling bandit problem with arms $\{a_1,\ldots,a_K\}$, satisfying $p_{1j} > \frac{1}{2}$ for all $j > 1$ (i.e., $a_1$ is the Condorcet winner). Then, for any dueling bandit algorithm and any $\alpha > \frac{1}{2}$ and $\delta > 0$, we have}
\begin{equation}\label{eqn:TailBound}  P\Big( \forall\,t>C(\delta),i,j,\; p_{ij} \in [l_{ij}(t),u_{ij}(t)] \Big) > 1-\delta. \end{equation}

\begin{proof}
To decompose the lefthand side of \eqref{eqn:TailBound}, we introduce the notation $\mathcal{G}_{ij}(t)$ for the ``good'' event that at time $t$ we have $p_{ij} \in [l_{ij}(t),u_{ij}(t)]$, which satisfies the following:
\begin{itemize}[leftmargin=*,topsep=0pt,parsep=0pt,partopsep=0pt]
\item[] \hspace{-5mm} (i) $\mathcal{G}_{ij}(t) = \mathcal{G}_{ji}(t)$ because of the triple of equalities $\Big(p_{ji},l_{ji}(t),u_{ji}(t)\Big) = \Big(1-p_{ij},1-u_{ij}(t),1-l_{ij}(t)\Big)$.
\item[] \hspace{-6mm} (ii) $\mathcal{G}_{ii}(t)$ always holds, since $\left(p_{ii},l_{ii}(t),u_{ii}(t)\right) = \left(\frac{1}{2},\frac{1}{2},\frac{1}{2}\right)$. Together with (i), this means that we only need to consider $\mathcal{G}_{ij}(t)$ for $i < j$.
\item[] \hspace{-4.5mm} (iii) Define $\tau^{ij}_n$ to be the iteration at which arms $i$ and $j$ were compared against each other for the $n^{th}$ time. If $G_{ij}\left(\tau^{ij}_n+1\right)$ holds, then the events $\mathcal{G}_{ij}(t)$ hold for all $t \in \left(\tau^{ij}_n,\tau^{ij}_{n+1}\right]$ because when $t \in \left(\tau^{ij}_n,\tau^{ij}_{n+1}\right]$, $w_{ij}$ and $w_{ji}$ remain constant and so in the expressions for $u_{ij}(t)$ and $u_{ji}(t)$ only the $\ln t$ changes, which is a monotonically increasing function of $t$. So, we have 
\[ l_{ij}(t) \leq l_{ij}(\tau^{ij}_n+1) \leq p_{ij} \leq u_{ij}(\tau^{ij}_n+1) \leq u_{ij}(t). \]

Moreover, the same statement holds with $\tau^{ij}_n$ replaced by any $T \in \left(\tau^{ij}_n,\tau^{ij}_{n+1}\right]$, i.e., if we know that $\mathcal{G}_{ij}(T)$ holds, then $\mathcal{G}_{ij}(t)$ also holds for all $t \in \left(T,\tau^{ij}_{n+1}\right]$. This is illustrated in Figure \ref{fig:lemma1Appendix}. 
\end{itemize}

Now, given the above three facts, we have for any $T$
\begin{align}\label{eqn:goodPruning}
& P\Big( \forall\,t\geq T,i,j,\; \mathcal{G}_{ij}(t) \Big) \\
& \; = P\Big( \forall\,i>j,\; \mathcal{G}_{ij}(T) \textup{ and } \forall\,n\;s.t.\; \tau^{ij}_n > T, \; \mathcal{G}_{ij}(\tau^{ij}_n) \Big). \nonumber
\end{align}
%
%
Let us now flip things around and look at the complement of these events, i.e. the ``bad'' event $\mathcal{B}_{ij}(t)$ that $p_{ij} \notin [l_{ij}(t),u_{ij}(t)]$ occurs. 
Then, subtracting both sides of Equation \eqref{eqn:goodPruning} from $1$ and using the union bound gives
\begin{align*}
& P\Big( \exists\,t>T,i,j\;s.t.\; \mathcal{B}_{ij}(t) \Big) \\
& \leq \sum_{i<j} \bigg[ P\Big( \mathcal{B}_{ij}(T) \Big) + P\Big( \exists\,n: \tau^{ij}_n > T \textup{ and } \mathcal{B}_{ij}(\tau^{ij}_n) \Big) \bigg]. 
\end{align*}
Further decomposing the righthand side using union bounds and making the condition explicit, we get

\vspace{-7mm}

\begin{align*}
& P\Big( \exists\,t>T,i,j\;s.t.\; \mathcal{B}_{ij}(t) \Big) \\
& \leq \sum_{i>j} \Bigg[ P\left( \left|p_{ij}-\mu^{ij}_{N_{ij}(T)}\right| > \sqrt{\frac{\alpha\ln T}{N_{ij}(T)}} \right) + \\
& P\left( \exists\,n \leq T\;s.t.\; \tau^{ij}_n > T \textup{ and } \left|p_{ij}-\mu^{ij}_n\right| > \sqrt{\frac{\alpha\ln \tau^{ij}_n}{n}} \right) \\
& \qquad\quad + P\left( \exists\,n > T\;s.t.\; \left|p_{ij}-\mu^{ij}_n\right| > \sqrt{\frac{\alpha\ln \tau^{ij}_n}{n}} \right) \Bigg], \\
\end{align*}

\vspace{-10mm}

since $T < n < \tau^{ij}_n$. Here, $\mu^{ij}_n := \frac{w_{ij}(\tau^{ij}_n)}{w_{ij}(\tau^{ij}_n)+w_{ji}(\tau^{ij}_n)}$ is the frequentist estimate of $p_{ij}$ after $n$ comparisons between arms $a_i$ and $a_j$. 

Now, in the above sum, we can upper-bound the first term by looking at the higher probability event that $\mathcal{B}_{ij}(T)$ happens for any possible number of comparisons between $a_i$ and $a_j$, and since we know that $N_{ij}(T) \leq T$, we can replace $N_{ij}(T)$ with a variable $n$ that can take values between $0$ and $T$. For the second term, we know that $\tau^{ij}_n > T$, so we can replace $\tau^{ij}_n$ with $T$ and remove the condition $\tau^{ij}_n > T$ and look at all $n \leq T$. For the third term, since we always have that $n < \tau^{ij}_n$, we can replace $\tau^{ij}_n$ with $n$ and get a higher probability event. Putting all of this together we get the looser bound

\vspace{-5mm}

\begin{align}
	& P\Big( \exists\,t>T,i,j\;s.t.\; \mathcal{B}_{ij}(t) \Big) \nonumber \\
	& \leq \sum_{i<j} \Bigg[ P\left( \exists\, n\in\{0,\ldots,T\}: \left| p_{ij} - \mu^{ij}_n \right| > \sqrt{\frac{\alpha\ln T}{n}} \right) \nonumber \\
	& \qquad\; + P\left( \exists\, n\in\{0,\ldots,T\}: \left| p_{ij} - \mu^{ij}_n \right| > \sqrt{\frac{\alpha\ln T}{n}} \right) \nonumber \\
	& \qquad\; + P\left( \exists\, n > T\; s.t. \; \left| p_{ij} - \mu^{ij}_n \right| > \sqrt{\frac{\alpha\ln n}{n}} \right) \Bigg] \nonumber \\
	& \leq \sum_{i<j} \Bigg[ 2\sum_{n=0}^T P\left( \left| p_{ij} - \mu^{ij}_n \right| > \sqrt{\frac{\alpha\ln T}{n}} \right) \nonumber \\
	& \qquad\quad + \sum_{n=T+1}^\infty P\left( \left| p_{ij} - \mu^{ij}_n \right| > \sqrt{\frac{\alpha\ln n}{n}} \right) \Bigg]. \label{eqn:CHstep}
\end{align}

\vspace{-2mm}

To bound the expression on line \eqref{eqn:CHstep}, we apply the Chernoff-Hoeffding bound, which in its simplest form states that given i.i.d.\ random variables $X_1, \ldots, X_n$, whose support is contained in $[0,1]$ and whose expectation satisfies $\mathbb{E}[X_k] = p$, and defining $\mu_n := \frac{X_1+\cdots+X_n}{n}$, we have $P(|\mu_n-p| > a) \leq 2e^{-2na^2}$. This gives us

\vspace{-5mm}

\begin{align}
	& P\Big( \exists\,t>T,i,j\;s.t.\; \mathcal{B}_{ij}(t) \Big) \nonumber \\
	& \leq \sum_{i<j} \left[ 2\sum_{n=1}^T 2e^{-2\cancel{n}\dfrac{\alpha\ln T}{\cancel{n}}} + \sum_{n=T+1}^\infty 2e^{-2\cancel{n}\dfrac{\alpha\ln n}{\cancel{n}}} \right] \nonumber \\ 
	& = \frac{K(K-1)}{2} \left[ \sum_{n=1}^T \frac{4}{T^{2\alpha}} + \sum_{n=T+1}^\infty \frac{2}{n^{2\alpha}} \right] \nonumber \\
	& \leq \frac{2K^2}{T^{2\alpha-1}} + K^2\int_{T}^\infty \frac{dx}{x^{2\alpha}}, \; \textup{since $\frac{1}{x^{2\alpha}}$ is decreasing.} \nonumber \\
	& \leq \frac{2K^2}{T^{2\alpha-1}} + K^2\int_{T}^\infty \frac{dx}{x^{2\alpha}} = \frac{2K^2}{T^{2\alpha-1}} + \frac{K^2}{(1-2\alpha) x^{2\alpha-1}} \bigg|_{T}^\infty \nonumber \\
	& = \frac{(4\alpha-1)K^2}{(2\alpha-1) T^{2\alpha-1}}. \label{eqn:badUpperBound}
\end{align}

\vspace{-2mm}

Now, since $C(\delta) = \left(\frac{(4\alpha-1)K^2}{(2\alpha-1)\delta}\right)^{\frac{1}{2\alpha-1}}$ for each $\delta~>~0$, the bound in \eqref{eqn:badUpperBound} gives us \eqref{eqn:TailBound}.

\end{proof}

\end{document}